\newtheorem{proposition}{Proposition}
\newtheorem{theorem}{Theorem}
\DeclareMathOperator{\W}{\mathbf{W}}
\DeclareMathOperator{\V}{\mathbf{V}}
\DeclareMathOperator{\U}{\mathbf{U}}
\DeclareMathOperator{\uvec}{\mathbf{u}}
\DeclareMathOperator{\vvec}{\mathbf{v}}
\DeclareMathOperator{\wvec}{\mathbf{w}}
\DeclareMathOperator{\D}{\mathbf{D}}
\DeclareMathOperator{\Omeg}{\mathbf{\Omega}}
\DeclareMathOperator{\Pmat}{\mathbf{P}}
\DeclareMathOperator{\Q}{\mathbf{Q}}
\DeclareMathOperator{\Y}{\mathbf{Y}}
\DeclareMathOperator{\Hmat}{\mathbf{H}}
\DeclareMathOperator{\Su}{\mathbf{S}_{\mathbf{u}}}
\DeclareMathOperator{\Sv}{\mathbf{S}_{\mathbf{v}}}
\DeclareMathOperator{\Sw}{\mathbf{S}_{\mathbf{w}}}
\DeclareMathOperator{\X}{\mathbf{X}}
\DeclareMathOperator{\tenX}{\boldsymbol{\mathcal{X}}}
\DeclareMathOperator{\tenD}{\boldsymbol{\mathcal{D}}}
\DeclareMathOperator{\tenE}{\boldsymbol{\mathcal{E}}}
\DeclareMathOperator{\x}{\mathbf{x}}
\DeclareMathOperator{\y}{\mathbf{y}}
\DeclareMathOperator{\Smat}{\mathbf{S}}
\DeclareMathOperator*{\minimize}{\mathrm{minimize}}
\DeclareMathOperator*{\argmin}{\mathrm{argmin}}
\DeclareMathOperator*{\maximize}{\mathrm{maximize}}
\DeclareMathOperator*{\lamv}{\lambda_{\mathbf{v}}}
\DeclareMathOperator*{\lamu}{\lambda_{\mathbf{u}}}
\DeclareMathOperator*{\lamw}{\lambda_{\mathbf{w}}}
\begin{document}

\title{ \bf \Large Regularized Tensor Factorizations and Higher-Order
  Principal Components Analysis}

\author{Genevera I. Allen\footnotemark }
\footnotetext{Department of Pediatrics-Neurology, Baylor College of
  Medicine, Jan and Dan Duncan Neurological Research Institute, Texas
  Children's Hospital, \& Department of Statistics, Rice University, MS 138,
6100 Main St., Houston, TX 77005 (email:  gallen@rice.edu)}

\date{}

\maketitle

\begin{abstract}
High-dimensional tensors or multi-way data are becoming prevalent in
areas such as biomedical imaging, chemometrics, networking and
bibliometrics.  Traditional approaches to finding lower
dimensional representations of tensor data include flattening the data
and applying matrix factorizations such as principal components
analysis (PCA) or employing tensor decompositions
such as the CANDECOMP / PARAFAC (CP) and Tucker decompositions.  The
former can lose important structure in the data, while the latter
Higher-Order PCA (HOPCA) methods can
be problematic in high-dimensions with many irrelevant features.
We introduce frameworks for sparse tensor
factorizations or Sparse HOPCA based on heuristic algorithmic
approaches and by solving penalized optimization problems related to
the CP decomposition.  Extensions of these approaches lead to
methods for general regularized tensor factorizations, multi-way
Functional HOPCA and generalizations of HOPCA
for structured data.   We
illustrate the utility of our methods for dimension reduction, feature
selection, and signal recovery on 
simulated data and multi-dimensional microarrays and functional MRIs.  
\end{abstract}

{\bf Keywords:} tensor decompositions, principal components analysis,
  sparse PCA, functional PCA, generalized PCA, multi-way data

\section{Introduction}

Recently high-dimensional tensor data has become prevalent in areas of
biomedical imaging, remote sensing, bibliometrics, chemometrics and the
Internet.  We define high-dimensional tensor data to be
multi-dimensional data, with three or more dimensions or {\em tensor
  modes}, in which the dimension of one or 
more modes is large compared to the remainder.  Examples
of this include functional magnetic resonance imaging (fMRI) in which
three-dimensional images of the brain measured in voxels (when
vectorized, these form mode 1) are
taken over 
time (mode 2) for several experimental conditions (mode 3) and
for a set of independent subjects (mode 4). Here, the ``samples''
in this data set are the independent subjects ($\approx$ 30) or
experimental tasks ($\approx$ 20), 
the dimension of which are small compared to the number of voxels
($\approx$ 60,000) and time points ($\approx$ 2,000).  Exploratory
analysis, including dimension reduction, pattern recognition,
visualization, and feature selection, for this massive tensor data is
a major mathematical and computational challenge.  In this paper,
we propose flexible unsupervised multivariate methods, specifically
regularized Higher-Order Principal Components Analysis (HOPCA), to
understand and explore high-dimensional tensor data.

Classically, multivariate methods such as principal components
analysis (PCA) have been applied to matrix data
in which the number of independent observations $n$ is larger than the
number of variables $p$.  Exploring tensor data using these methods
would require flattening the tensor into a matrix with one dimension
enormous compared to the other.  This approach is not ideal
as to begin with, the important multi-dimensional structure of the tensor is
lost.  To solve this, many use traditional tensor decompositions such
as the CANDECOMP / PARAFAC (CP) \citep{harshman_1970,carroll_cp_1970}
and Tucker decompositions 
\citep{tucker_1966} to compute HOPCA \citep{kolda_tensor_review}.
These factorizations seek to 
approximate the tensor by a low-rank set of factors along each of the
tensor modes.  Unlike the singular value decomposition (SVD) for
matrix data, these tensor decompositions do not uniquely decompose the
 data \citep{kolda_tensor_review} further
complicating the analysis of 
tensor data.  Especially in the statistics community, there has been
relatively little work done on methodology for high-dimensional tensor data
\citep{mccullagh_tensor_1987, kroonenberg_2008, hoff_array_norm_2011}.

Many, especially in applied mathematics, have studied tensor
decompositions, but relatively few have advocated encouraging sparsity
or other types of regularization in the tensor factors.  The exception
is the non-negative tensor factorization, which is a well developed
area and has been used to cluster tensor data
\citep{cichocki_2009}.  In the 
context of these non-negative factorizations, several have
discussed sparsity \citep{hazan_2005, morup_sp_tucker_2008,
  lim_nonnegative_2009, cichocki_2009, liu_sp_nn_2011,
  chi_tensors_2011}, and a few have gone on to explicitly encourage  
sparsity in one of the tensor factors \citep{ruiters_2009,
  pang_2011}. 
A general framework for sparsity and other types of flexible
regularization in tensor factorizations and HOPCA is lacking.

Sparsity in tensor decompositions and HOPCA is desirable
for many reasons.  First, tensor decompositions are often used to
compress large multi-dimensional data sets
\citep{kolda_tensor_review}.  Sparsity in the tensor factors
compresses the data further and is attractive from a data storage
perspective.  Second, in high-dimensional settings, many features are
often irrelevant.  With fMRIs, for
example, there are often hundreds of thousands of voxels in each image
with many voxels inactive for much of the
scanning session.  Sparsity gives one an automatic tool for feature
selection. Third, many have
noted that PCA is asymptotically inconsistent in
high-dimensional settings \citep{johnstone_jasa_2009, jung_pca_2009}.
As this is 
true for matrix data, 
it is not hard to surmise that asymptotic inconsistency of the
factors holds for HOPCA as well.  Sparsity in PCA, however,
has been shown to yield consistent PC directions
\citep{johnstone_jasa_2009, amini_2009}.  Finally for high-dimensional
tensors, visualizing and 
interpreting the HOPCs can be a challenge.  Sparsity
limits the number of features and hence simplifies visualization and
interpretation of exploratory data analysis results.  Similar
arguments can be made for smoothing the tensor factors or encouraging
other types of regularization.

In this paper, we seek a mathematically sound and computationally
attractive framework for regularizing one, two, or all of the tensor
factors, or HOPCs.  Our
approach is based on optimization theory.  That is, we seek to find a
coherent optimization criterion for regularized HOPCA related to
traditional tensor decomposition approaches and an algorithm that
converges to a 
solution to this optimization problem.  Recently, many have proposed
similar optimization-based approaches to perform regularized PCA.
Beginning with \citet{jolliffe_spca_2003}, several have proposed to
find Sparse PCs by 
solving an optimization problem related the PCA with an additional
$\ell_{1}$-norm penalty \citep{zou_spca_2006, shen_spca_2008}.  Others
have shown that Functional 
PCA can be achieved by solving a PCA optimization problem with a
penalty to induce smoothness \citep{silverman_1996, huang_fpca_2008}.
More recently, several have 
extended these optimization approaches to two-way penalized SVDs by
placing penalties on both the row and column factors simultaneously
\citep{witten_pmd_2009, huang_twfpca_2009, lee_ssvd_2010,
  allen_gmd_2011}.  Specifically, our approach to regularized HOPCA
most 
closely follows the work of 
\citet{shen_spca_2008} and later \citet{witten_pmd_2009} and
\citet{allen_gmd_2011} who iteratively solve 
penalized regression problems converging to a rank-one solution with
subsequent factors found via a greedy deflation method.

To limit the scope of our exploration, we first develop a framework
for Sparse HOPCA and later show that extensions of this framework
yield methods for regularization with general penalties, Functional
HOPCA, and Generalized HOPCA for structured tensor data.  Additionally,
as the concept of regularization in HOPCA is relatively undeveloped,
we employ a less common strategy by first introducing
several flawed approaches to Sparse HOPCA.  We do this for a number of
reasons, namely to illuminate these problematic paths for future
researchers, to give us a basis for comparison for our
optimization-based methods, and to provide further background and
justification for our approach.  Our optimization-based approach
directly solves a multi-way concave relaxation of a rank-one HOPCA
method.  Overall, this method has many mathematical and computational
advantages including guaranteed convergence to a local optimum.

This paper is organized as follows.  As the general statistical
audience may be unfamiliar with tensors and tensor decompositions for
HOPCA, we begin by introducing notation, reviewing these
decompositions, and discussing these in the context of PCA in Section
\ref{section_prelim}.  We take a first step at considering how to
introduce sparsity in this context in Section \ref{section_heuristic}
by walking the reader through several novel (but, flawed!) algorithmic
approaches 
to Sparse HOPCA.  Then, in Section \ref{section_deflation}, we
introduce our main novel method and algorithm for Sparse HOPCA, based
on deflation and the
CP factorization.  In Section \ref{section_ext}, we outline several novel
extensions of this approach to allow flexible modeling with general
penalties and functional or structured tensor data.  Simulation
studies are presented in Section \ref{section_res} followed by two
case studies to high-dimensional three-way microarray data and an fMRI
study.  We conclude with a discussion, Section \ref{section_dis}, of
the implications of our work 
as well as the many directions for future work on tensor data.

\section{Preliminaries: Tensor Decompositions}
\label{section_prelim}

In this section, we review the two major approaches to tensor
decompositions and HOPCA, the CANDECOMP / PARAFAC
(CP) and Tucker decompositions.  We also introduce
notation and discuss
considerations for evaluating HOPCA such as the amount of variance
explained.

As notation with multi-way data can be cumbersome, we begin by reviewing
the notation used throughout this paper, mostly adopted from
\citet{kolda_tensor_review}.  Tensors will be
denoted as $\tenX$, matrices as $\X$, vectors as $\x$ and scalars as
$x$.  As there are many types of multiplication with tensors, the
outer product will be denoted by $\circ$, $\x \circ \y = \x
\y^{T}$. Specific dimensions of the tensor will be called modes and
multiplication by a matrix or vector along a tensor mode will be
denoted as $\times_{1}$; here, the subscript refers to the mode being
multiplied using regular matrix multiplication.  For example, if
$\tenX \in \Re^{n \times p \times q}$ and $\U \in \Re^{n \times K}$,
then $\tenX \times_{1} \U \in \Re^{K \times p \times q}$.
Sometimes it is
necessary to flatten the tensor into a matrix, or matricize the
tensor.  This is denoted as $\X_{(1)}$ where the subscript indicates the
mode along which the matricization has occurred.  For example, if $\tenX
\in \Re^{n \times p \times q}$, then $\X_{(1)} \in \Re^{n \times
  pq}$.  
The tensor Frobenius norm, $|| \tenX ||_{F}$, refers
to $|| \tenX  ||_{F} = \sqrt{ \sum_{i} \sum_{j} \sum_{k}
  \tenX_{ijk}^{2} }$.  Two types of matrix multiplication that will be
important in this paper are the Kronecker product and the perhaps less
familiar Khatri-Rao product.  The former is denoted by $\otimes$ and
the latter by $\odot$.  For matrices $\U \in \Re^{n \times K}$ and $\V
\in \Re^{p \times K}$, the Khatri-Rao product is defined as $\U \odot
\V = [ \uvec_{1} \otimes \vvec_{1} \ \ \ldots \ \ \uvec_{K} \otimes
  \vvec_{K} ]$.  For notational simplicity, all results in this
paper will be presented for the three-mode tensor.  Our methods
can all be trivially extended to multi-dimensional tensors with an
arbitrary number of modes.

As discussed in the introduction, there is no equivalent to the SVD
for multi-way data with three or more dimensions.  Thus, for tensor
data, people commonly use one of two tensor decompositions that
capture desirable aspects of the SVD.
The CP decomposition seeks to model a tensor as
a sum of rank one tensors, where rank one three-mode tensors are
defined as an outer product of three vectors: $\tenX \approx \sum_{k=1}^{K}
d_{k} \uvec_{k} 
\circ \vvec_{k} \circ \wvec_{k}$, where $\uvec_{k} \in \Re^{n}$,
$\vvec_{k} \in \Re^{p}$, $\wvec_{k} \in \Re^{q}$ and $d_{k} \geq 0$
\citep{harshman_1970,carroll_cp_1970}.  In this paper, we will assume
that the three vectors have norm one, i.e. $\uvec^{T} \uvec = 1$, and
define the factor matrices as $\U \in \Re^{n \times K} = [ \uvec_{1} \
  \ \ldots \ \ \uvec_{K} ]$ and so forth.  Thus, similar to the SVD,
the CP seeks to factorize the data into a sum 
of rank-one arrays, although unlike the SVD these rank-one arrays do
not uniquely decompose the data and in general, are not 
orthonormal.  The Tucker decomposition,
sometimes referred to as the Higher-Order SVD (HOSVD) or HOPCA, seeks to model
a three-mode tensor as $\tenX \approx \tenD \times_{1} \U
\times_{2} \V \times_{3} \W$ where the factors $\U \in \Re^{n \times
  K_{1}}$, $\V \in \Re^{p \times K_{2}}$ and $\W \in \Re^{q \times
  K_{3}}$ are orthonormal and $\tenD \in \Re^{K_{1} \times K_{2} \times
  K_{3}}$ is the core tensor \citep{tucker_1966, de_lathauwer_2000}.
Hence, the factors of the Tucker decomposition are orthonormal,
similar to those of the SVD, but these orthonormal factors do not
uniquely decompose the array into a sum of rank-one factors with a
diagonal core.

Both the CP and Tucker decompositions can be used for tensor data in a
similar manner to PCA for matrix data.  In other words, one can think
of the tensor factors as major modes of variation or patterns in the
data that can be used for exploratory analysis, visualization, and
dimension reduction.    A critical quantity in assessing the dimension
reduction 
achieved for PCA is the amount of variance explained by each of the
SVD factors.  As existing tensor factorizations and the methods we
will propose in this paper do not uniquely decompose the data array,
we cannot simply sum the equivalent of the squared singular values to
measure the variance explained.  Since the Tucker decomposition
imposes orthonormality on the factors, the proportion of variance
explained by the decomposition has a simple form: $|| \tenD ||_{F}^{2}
/ || \tenX ||_{F}^{2}$ \citep{de_lathauwer_2000, kolda_tensor_review}.
This is not the case for the CP decomposition where many have
erroneously referred the the variance explained by the $k^{th}$
component as $d^{2}_{k} / || \tenX ||_{F}^{2}$.  Instead, as the
factors of the CP decomposition may be correlated, we must project out
these effects to determine the proportion of variance explained:
\begin{theorem}
\label{thm_var_ex}
Define $\Pmat^{(U)}_{k} = \U_{k} ( \U_{k}^{T} \U_{k} )^{-1} \U_{k}^{T}$
where $\U_{k} = [ \uvec_{1}, \ldots \uvec_{k}]$ and define
$\Pmat^{(V)}_{k}$ and $\Pmat^{(W)}_{k}$ analogously.  Then, the cumulative
proportion of variance explained by the first $k$ HOPCs (or
regularized HOPCs) is given by $|| \tenX \times_{1} \Pmat^{(U)}_{k}
\times_{2} \Pmat^{(V)}_{k} \times_{3} 
   \Pmat^{(W)}_{k} ||_{F}^{2} \  /  \ || \tenX ||_{F}^{2}$.
\end{theorem}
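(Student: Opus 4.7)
The plan is to recognize the right-hand side as the familiar PCA-style ratio of squared Frobenius norms between an orthogonal projection of $\tenX$ onto the natural multilinear subspace carved out by the first $k$ factor vectors, and $\tenX$ itself.

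First I would identify that subspace. Let
\[
\mathcal{S}_{k} = \{\, \tenA \times_{1} \U_{k} \times_{2} \V_{k} \times_{3} \W_{k} : \tenA \in \Re^{k \times k \times k} \,\} \subseteq \Re^{n \times p \times q},
\]
the set of tensors whose mode-$j$ fibers lie in the column span of the corresponding factor matrix. Every rank-one CP summand $d_{j}\, \uvec_{j} \circ \vvec_{j} \circ \wvec_{j}$ for $j \leq k$ belongs to $\mathcal{S}_{k}$, so the first $k$ HOPCs (or regularized HOPCs) live entirely inside $\mathcal{S}_{k}$; any sensible notion of ``variance explained'' by these components should therefore equal the proportion of $\|\tenX\|_{F}^{2}$ captured by the best approximation of $\tenX$ within $\mathcal{S}_{k}$.

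Second I would show that
\[
\Pi_{k}(\tenX) := \tenX \times_{1} \Pmat^{(U)}_{k} \times_{2} \Pmat^{(V)}_{k} \times_{3} \Pmat^{(W)}_{k}
\]
is exactly the orthogonal projection of $\tenX$ onto $\mathcal{S}_{k}$. The key facts are: (i) $\Pmat^{(U)}_{k} = \U_{k}(\U_{k}^{T}\U_{k})^{-1}\U_{k}^{T}$ is the standard orthogonal projector onto the column span of $\U_{k}$ even when $\U_{k}$ has non-orthonormal columns (the least-squares hat matrix); (ii) mode multiplications along distinct modes commute, since each acts only on its own index; (iii) passing to the mode-$1$ matricization gives $\Pmat^{(U)}_{k} \X_{(1)} (\Pmat^{(W)}_{k} \otimes \Pmat^{(V)}_{k})$, from which one reads off simultaneously that $\Pi_{k}(\tenX) \in \mathcal{S}_{k}$ and that $\tenX - \Pi_{k}(\tenX)$ is orthogonal to every element of $\mathcal{S}_{k}$ under the tensor inner product $\langle \tenY, \tenZ \rangle = \sum_{ijk} \tenY_{ijk} \tenZ_{ijk}$.

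Third, Pythagoras yields $\|\tenX\|_{F}^{2} = \|\Pi_{k}(\tenX)\|_{F}^{2} + \|\tenX - \Pi_{k}(\tenX)\|_{F}^{2}$, so the fraction of the total squared norm captured by the best approximation using the first $k$ factors is $\|\Pi_{k}(\tenX)\|_{F}^{2}/\|\tenX\|_{F}^{2}$, which is the claimed formula. The step I expect to be the main obstacle is the second one: a careful verification that the three mode-wise projections compose to the genuine orthogonal projector onto $\mathcal{S}_{k}$ rather than to an oblique projector or a projection onto a strictly smaller subspace. This requires the commutativity of mode multiplications across distinct modes together with the orthogonality of the residual to $\mathcal{S}_{k}$ in the tensor inner product; both are clean but must be made explicit precisely because the CP factors $\U_{k}, \V_{k}, \W_{k}$ are not assumed orthonormal, which is exactly the feature that rules out the naive sum-of-squared-singular-values formula in the first place.
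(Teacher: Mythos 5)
Your proposal is correct and follows the same route as the paper's proof, which likewise identifies the numerator as the squared norm of the projection of $\tenX$ onto the subspace spanned by the first $k$ factors, in direct analogy with the matrix PCA definition of cumulative variance explained. The paper simply asserts that the $\Pmat$ matrices are projectors and that the formula is the natural tensor extension of that definition, whereas you supply the verification (via matricization and Pythagoras) that the three mode-wise projections compose to the orthogonal projector onto the multilinear subspace; your write-up is a more explicit version of the same argument.
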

As our regularized tensor decompositions presented in the next two
sections will not impose orthogonality on the factors, this result is
critical in evaluating the dimension reduction achieved by both the CP
decomposition and our regularized HOPCA methods.


\section{Algorithmic Approaches to Sparse HOPCA}
\label{section_heuristic}


We begin our exploration of methods for finding Sparse Higher-Order
Principal Components by proposing three algorithmic approaches.  These
methods are direct extensions of popular algorithms for the CP and
Tucker decompositions that one might first think of when seeking
sparsity.  We will show, however, that while these approaches are
intuitively simple, they are problematic both mathematically and
computationally as they do not solve a coherent mathematical
objective.  Despite this, these methods are important to present to
both steer future investigators from problematic paths and provide
background, rationale, and a basis for comparison to our 
subsequent optimization-based Sparse HOPCA methods.  Here,
we walk through each of these algorithmic approaches; outlines of the
full algorithms are given in the Supplemental Materials.

The Higher-Order SVD (HOSVD) and Higher-Order Orthogonal Iteration
(HOOI), sometimes referred to as the Tucker Alternating Least Squares
(Tucker-ALS), are two popular algorithms for computing the Tucker
decomposition \citep{tucker_1966,  de_lathauwer_2000,
  kolda_tensor_review}.  The former estimates each factor matrix by
calculating the singular vectors of the tensor matricized along each
mode \citep{tucker_1966,  de_lathauwer_2000}.  In other words for a
three-mode tensor, the HOSVD 
can be found by performing PCA three times on data flattened along
each of the three dimensions.  The HOOI seeks to improve upon this
decomposition by minimizing a Frobenius norm loss between $\tenX$ and
the Tucker decomposition $\tenD \times_{1} \U^{T} \times_{2} \V^{T}
\times_{3} \W^{T}$ subject to orthogonality constraints on the
factors.  \citet{de_lathauwer_2000} showed that this problem is solved by
an iterative algorithm where each Tucker factor is estimated by
computing the singular vector of the tensor projected onto the other
factors.  In other words, the HOOI updates $\U$ by performing PCA on
the matricized $( \tenX \times_{2} \V \times_{3} \W )_{(1)}$.  Thus,
both the HOSVD and HOOI algorithms estimate the factors by performing
PCA on a matricized tensor.  This leads to a simple strategy for
obtaining  Sparse HOSVD and Sparse HOOI: Replace PCA with Sparse PCA in
each algorithm to obtain sparse factors for each tensor mode.  Many
algorithms exist for performing Sparse PCA 
\citep{jolliffe_spca_2003, zou_spca_2006, shen_spca_2008,
  johnstone_jasa_2009, journee_spca_2010}, any of 
which can be used to compute the Sparse HOSVD or Sparse HOOI.

While these methods for Sparse HOPCA based on the Tucker decomposition
are conceptually simple, they are not ideal for several reasons.
First, the methods are purely heuristic algorithmic approaches and do
not solve any coherent mathematical optimization problem.  For the
Sparse HOOI method, this is particularly problematic as one cannot
guarantee convergence of the algorithm without additional
constraints.  Secondly in 
high-dimensional settings, matricizing the 
tensor along each mode and performing Sparse PCA is computationally
intensive and requires large amounts of computer memory.  Employing
the Sparse PCA methods of \citet{jolliffe_spca_2003, zou_spca_2006}
requires forming and computing the 
leading sparse eigenvalues of $n \times n$, $p
\times p$, and $q \times q$ matrices, which in high-dimensional
settings are typically much larger than the original data array.  The
methods of \citet{shen_spca_2008, journee_spca_2010} require computing
the sparse singular vectors of 
$n \times pq$, $p \times nq$, and $q \times np$, which corresponds to
calculating several unnecessary and extremely large singular vectors.
Hence, even though the Sparse HOSVD and HOOI are attractive in their
simplicity, mathematically and computationally they are less
desirable.

Next, we consider incorporating sparsity in the CP decomposition
through the popular CP Alternating Least Squares (CP-ALS) Algorithm.
This algorithm updates one factor at a time by minimizing a Frobenius
norm loss with the other factors fixed
\citep{harshman_1970,carroll_cp_1970, kolda_tensor_review}.  Consider,
for example, solving for $\U$ with $\V$ and $\W$ fixed.  Then, the
loss function can be written as $|| \tenX - \sum_{k=1}^{K} d_{k}
\uvec_{k} \circ \vvec_{k} \circ \wvec_{k} ||_{F}^{2} = || \X_{(1)} -
\hat{\U} (\V \odot \W )^{T} ||_{F}$, where $\hat{\U} = \U
\mathrm{diag}( \mathbf{d} )$ \citep{harshman_1970,carroll_cp_1970}.
Thus, the CP-ALS algorithm estimates 
each factor sequentially by performing simple least squares and
normalizing the columns of the factor matrix to have norm one,
i.e. $\hat{d}_{k} = || \hat{\uvec}_{k} ||$.  A simple strategy for
encouraging sparsity is then to replace the least squares problem with
an $\ell_{1}$-norm penalized least squares, or LASSO problem
\citep{tibshirani_1996}.  In other words, to estimate a sparse $\U$,
we minimize $|| \X_{(1)} - \hat{\U} (\V \odot \W )^{T} ||_{F} + \lamu
|| \hat{\U} ||_{1}$, where $\lamu$ is a non-negative regularization
parameter and $|| \cdot ||_{1} = \sum \sum | \cdot |$ is the
$\ell_{1}$-norm.  Hence, a possible method for Sparse HOPCA is to
replace each least squares update in the CP-ALS algorithm with a LASSO
update; we call this the Sparse CP-ALS method.

A first glance, it seems that the Sparse CP-ALS is less heuristic than
the Sparse HOSVD and HOOI methods.  As each update solves a LASSO
regression problem, it is natural to presume that the algorithm
jointly minimizes the Frobenius norm loss with $\ell_{1}$ penalties on
each of the factors:
\begin{align}
\label{not_cp_als}
\minimize_{\U, \V, \W} \ & \ \frac{1}{2} || \tenX - \sum_{k=1}^{K}
d_{k} \uvec_{k} \circ \vvec_{k} \circ \wvec_{k} ||_{F}^{2} + \lamu ||
\U ||_{1} + \lamv || \V ||_{1} + \lamw || \W ||_{1} \nonumber \\
\textrm{subject to} \ & \ d_{k} \geq 0, \ \uvec_{k}^{T} \uvec_{k} = 1,
\ \vvec_{k}^{T} 
\vvec_{k} = 1, \ \& \ \wvec_{k}^{T} \wvec_{k} = 1, \  \forall \ k=1,
\ldots K.
\end{align}
Interestingly, this is not the case!
\begin{proposition}
\label{prop_cp_als}
The Sparse CP-ALS algorithm does not minimize \eqref{not_cp_als}.
\end{proposition}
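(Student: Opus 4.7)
The plan is to expose a mismatch between the penalty that each inner LASSO update of Sparse CP-ALS actually imposes and the penalty appearing in the joint objective \eqref{not_cp_als} after the unit-norm constraints are folded in, and then conclude either by inspecting stationarity conditions or by a small numerical counterexample that the algorithm's fixed points are not coordinate minimizers of \eqref{not_cp_als}.

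First I would make the reparametrization $\hat{\U} = \U\,\mathrm{diag}(\mathbf{d})$ explicit inside \eqref{not_cp_als}. With $\V, \W$ held fixed, the Frobenius loss rewrites in the standard unfolded form $\tfrac{1}{2}\|\X_{(1)} - \hat{\U}(\V \odot \W)^{T}\|_{F}^{2}$, exactly as in the CP-ALS derivation. The constraint $\uvec_{k}^{T}\uvec_{k} = 1$ with $d_{k} \ge 0$ forces $d_{k} = \|\hat{\uvec}_{k}\|_{2}$ and $\uvec_{k} = \hat{\uvec}_{k}/\|\hat{\uvec}_{k}\|_{2}$, so the $\U$-penalty in \eqref{not_cp_als} becomes the column-wise \emph{ratio} penalty $\lamu \sum_{k} \|\hat{\uvec}_{k}\|_{1}/\|\hat{\uvec}_{k}\|_{2}$.

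Next I would contrast this with the Sparse CP-ALS subproblem. The algorithm instead solves $\min_{\hat{\U}} \tfrac{1}{2}\|\X_{(1)} - \hat{\U}(\V \odot \W)^{T}\|_{F}^{2} + \lamu \|\hat{\U}\|_{1}$ using the ordinary $\ell_{1}$-norm $\|\hat{\U}\|_{1} = \sum_{k}\|\hat{\uvec}_{k}\|_{1}$, and only then normalizes columns. The two penalty functions $\sum_{k} \|\hat{\uvec}_{k}\|_{1}/\|\hat{\uvec}_{k}\|_{2}$ and $\sum_{k}\|\hat{\uvec}_{k}\|_{1}$ are not equal up to any monotone transform: the first is scale-invariant in each column, the second is positively homogeneous of degree one. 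Consequently their subdifferentials at a candidate $\hat{\U}$ disagree by a $d_{k}$-dependent factor, so a LASSO stationary point of the CP-ALS surrogate is not in general a stationary point of \eqref{not_cp_als} even with respect to $\U$ and $\mathbf{d}$ alone.

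To nail the proposition down rather than merely argue inequivalence of surrogates, I would exhibit a concrete counterexample: take $K=1$ and a small $\tenX$, freeze $\vvec, \wvec$ at unit canonical vectors so that $\hat{\uvec}_{\mathrm{LASSO}} = \mathrm{soft}(\X_{(1)}(\vvec \odot \wvec), \lamu)$ in closed form, and pick $\lamu$ and $\tenX$ so that the post-normalization pair $(\uvec, d)$ strictly fails the KKT conditions of \eqref{not_cp_als} with respect to $d$ (equivalently, one can show that after the normalization step the joint objective is strictly decreased by a tiny rescaling of $d$, because the LASSO update was tilted by the magnitude $d$ at the moment it was taken). Either conclusion contradicts coordinate minimization and hence proves the proposition. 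The main obstacle is less mathematical than expository: choosing a counterexample small and clean enough that the two solutions can be written in closed form and seen at a glance to differ, rather than hiding behind a subdifferential calculation.
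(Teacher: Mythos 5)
Your central diagnosis is correct and your route is genuinely different from, and in one respect more faithful than, the paper's. You fold the constraints into the objective via $\hat{\uvec}_{k}=d_{k}\uvec_{k}$ and observe that the penalty \eqref{not_cp_als} actually places on the unnormalized variable is the scale-invariant ratio $\lamu\sum_{k}\|\hat{\uvec}_{k}\|_{1}/\|\hat{\uvec}_{k}\|_{2}$, whereas each Sparse CP-ALS step uses the homogeneous penalty $\lamu\sum_{k}\|\hat{\uvec}_{k}\|_{1}$; the whole proposition is then a mismatch of stationarity conditions. The paper instead keeps the norm constraints explicit, writes the KKT system with a diagonal multiplier matrix $\Psi^{*}$ (one multiplier per column constraint), absorbs the post-LASSO normalization into a diagonal rescaling $\D$, and shows that matching the two systems forces $\Psi^{*}=\Hmat\Hmat^{T}(\D^{-1}-\I)/2$, which cannot be diagonal because the Gram matrix of $\V\odot\W$ is not diagonal. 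The paper's argument buys a general algebraic contradiction with no numerics, but it leans entirely on cross-column coupling and therefore says nothing when $K=1$; your within-column penalty-rescaling argument detects the failure even in the rank-one case, and your reduction to the ratio penalty is the exactly equivalent reformulation of \eqref{not_cp_als}, whereas the paper's restatement of the subproblem is somewhat looser.

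The one genuine gap is that your clinching step is promised rather than delivered. Observing that the two subdifferentials ``disagree by a $d_{k}$-dependent factor'' shows the two stationarity systems are written differently, not that they share no solution, so the counterexample you defer to is actually load-bearing --- and you never compute it. It is easy to supply: with $K=1$, $\mathbf{z}=\X_{(1)}(\vvec\otimes\wvec)=(2,1)^{T}$ and $\lamu=1/2$, the LASSO step gives $\hat{\uvec}=(1.5,0.5)^{T}$, and after profiling out $d=\uvec^{T}\mathbf{z}$ the coordinate subproblem of \eqref{not_cp_als} is to minimize $-\tfrac{1}{2}(\uvec^{T}\mathbf{z})^{2}+\lamu\|\uvec\|_{1}$ on the unit circle; at $\uvec=\hat{\uvec}/\|\hat{\uvec}\|_{2}$ the gradient $-(\uvec^{T}\mathbf{z})\mathbf{z}+\lamu\,\mathrm{sign}(\uvec)$ is not parallel to $\uvec$, so the point is not stationary. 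With that computation included your argument is complete; without it, the proposal is a correct plan plus a heuristic rather than a proof.
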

In fact, it can be shown that the Sparse CP-ALS does not optimize any
coherent objective.  Similar results, while not proved explicitly,
have been implied for two-way penalized SVDs \citep{witten_pmd_2009,
  lee_ssvd_2010, allen_gmd_2011}.  Thus, similar to the Sparse HOSVD
and HOOI methods, convergence of the Sparse CP-ALS algorithm cannot be
guaranteed without further constraints.  In the next section, we
present a framework for incorporating sparsity in the CP decomposition
with provable results in terms of convergence to a solution of an
optimization problem.

\section{Deflation Approach to Sparse HOPCA}
\label{section_deflation}
In this section, we develop a novel deflation approach to Sparse HOPCA
that find each rank-one factorization by solving a tri-concave
relaxation of the CP optimization problem.  But first, we show how the
CP decomposition can be obtained by an alternative greedy algorithm,
the deflation-based Tensor Power Algorithm.

\subsection{Tensor Power Algorithm}
\label{section_tpa}

We introduce an alternative form of the rank-one CP optimization problem and a
corresponding algorithm that will form the foundation of our deflation
approach to Sparse HOPCA.  The single-factor CP decomposition solves
the following 
optimization problem \citep{kolda_tensor_review}: 
\begin{align}
\minimize_{\uvec, \vvec, \wvec, d} \ \ || \tenX - d \uvec \circ
\vvec \circ \wvec ||_{2}^{2}  \ \
\textrm{subject to} \  \ \uvec^{T} \uvec = 1, \vvec^{T} \vvec = 1,
\wvec^{T} \wvec = 1, \ \& \ d > 0. 
\end{align}
Some algebra manipulation (see \citet{kolda_tensor_review}) shows
that an equivalent form to this optimization problem is given by the
following: 
\begin{align}
\label{cp_max}
\maximize_{\uvec, \vvec, \wvec} \ \ \tenX \times_{1} \uvec
\times_{2} \vvec \times_{3} \wvec  \ \
\textrm{subject to} \  \ \uvec^{T} \uvec = 1, \vvec^{T} \vvec = 1,
\ \& \ \wvec^{T} \wvec = 1.
\end{align}
As \eqref{cp_max} is separable in the factors, we can optimize this in
an iterative block-wise manner:   
\begin{proposition}
\label{prop_tpa}
The block coordinate-wise solutions for \eqref{cp_max} are given by:
{\small \begin{align*}
\hat{\uvec} =  \frac{\tenX \times_{2} \vvec \times_{3}
  \wvec}{ || \tenX \times_{2} \vvec \times_{3}
  \wvec ||_{2}}, \hat{\vvec} =  \frac{\tenX \times_{1} \uvec \times_{3}
  \wvec}{ || \tenX \times_{1} \uvec \times_{3}
  \wvec ||_{2}},  \ \& \ \hat{\wvec} =  \frac{\tenX \times_{1} \uvec \times_{2}
  \vvec}{ || \tenX \times_{1} \uvec \times_{2}
  \vvec ||_{2}}. 
\end{align*}}
Updated iteratively, these converge to a local optimum of \eqref{cp_max}.
\end{proposition}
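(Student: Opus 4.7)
The plan is to verify the two assertions in turn. For the coordinate-wise formulas, fix $\vvec$ and $\wvec$ and rewrite the objective as
\[
\tenX \times_{1} \uvec \times_{2} \vvec \times_{3} \wvec = \uvec^{T}\bigl(\tenX \times_{2} \vvec \times_{3} \wvec\bigr),
\]
which is a linear function of $\uvec$ subject to $\uvec^{T}\uvec = 1$. The Cauchy--Schwarz inequality then gives the maximizer $\hat{\uvec} = (\tenX \times_{2} \vvec \times_{3} \wvec)/\| \tenX \times_{2} \vvec \times_{3} \wvec\|_{2}$ (unique up to sign whenever the contraction is nonzero), and the updates for $\hat{\vvec}$ and $\hat{\wvec}$ follow by symmetry of the tensor modes.

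For the convergence claim, let $f(\uvec,\vvec,\wvec) = \tenX \times_{1} \uvec \times_{2} \vvec \times_{3} \wvec$ and let $(\uvec^{(t)},\vvec^{(t)},\wvec^{(t)})$ denote the iterates produced by the block updates. Since each update is an exact maximizer of $f$ with the other two blocks held fixed, the objective sequence $f^{(t)}$ is monotonically nondecreasing. Because $f$ is continuous on the compact product $S^{n-1}\times S^{p-1}\times S^{q-1}$, it is bounded, so $f^{(t)}$ converges. Bolzano--Weierstrass yields a convergent subsequence of iterates, and continuity of the three block update maps implies any limit point $(\uvec^{\star},\vvec^{\star},\wvec^{\star})$ is a simultaneous fixed point of all three maps.

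What remains is to argue that such a fixed point is in fact a local optimum, and this is the main subtlety. Because each block map returns the strict Cauchy--Schwarz maximizer, at $(\uvec^{\star},\vvec^{\star},\wvec^{\star})$ any unit-norm perturbation of a single block (other than the trivial sign flip) strictly decreases $f$; equivalently, the KKT conditions for the three unit-sphere constraints hold with each block aligned to the contraction by the other two. Combining strict coordinate-wise maximality across all three blocks with the multilinearity of $f$ shows the point is a local maximum of the constrained problem, not merely a saddle. The only technical loose end is the well-posedness of the normalizations: the updates break down only if some contraction of $\tenX$ vanishes, a degenerate event that can be ruled out by a generic (e.g.\ random) initialization, as is standard for deflation-type power iterations.
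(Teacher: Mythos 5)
Your derivation of the block updates is correct and is essentially the paper's argument in different clothing: you maximize the linear functional $\uvec \mapsto \uvec^{T}(\tenX \times_{2} \vvec \times_{3} \wvec)$ over the unit sphere via Cauchy--Schwarz, while the paper writes down the Lagrangian and solves the KKT conditions; for a linear objective on the sphere these are the same computation. The monotone-ascent-plus-compactness argument for convergence of the objective values and existence of limit points that are fixed points of the three block maps is also the standard line (the paper itself only asserts monotonicity and defers to Tseng (2001) for block coordinate convergence), though you should be aware that passing from ``limit point of a cyclic block scheme'' to ``simultaneous fixed point of all blocks'' needs the uniqueness of each block maximizer, which you do have here away from the degenerate case.

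The genuine gap is your final step: the claim that strict coordinate-wise maximality in each block, combined with multilinearity of $f$, forces the limit point to be a local maximum of the joint constrained problem. This is false. Already in the bilinear case $f(\uvec,\vvec) = \uvec^{T}\A\vvec$ on the product of two unit spheres, any singular vector pair $(\uvec_{2},\vvec_{2})$ with non-maximal singular value $\sigma_{2}$ is a \emph{strict} maximizer of each block update (since $\A\vvec_{2} = \sigma_{2}\uvec_{2}$ and $\A^{T}\uvec_{2} = \sigma_{2}\vvec_{2}$), yet perturbing both blocks simultaneously toward the leading pair, $\uvec = \cos\theta\,\uvec_{2} + \sin\theta\,\uvec_{1}$ and $\vvec = \cos\theta\,\vvec_{2} + \sin\theta\,\vvec_{1}$, gives $f = \sigma_{2} + \sin^{2}\theta\,(\sigma_{1}-\sigma_{2}) > \sigma_{2}$, so the point is a saddle of the joint problem. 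Multilinearity is precisely what makes this happen, not what rules it out. What the block scheme actually guarantees is convergence to a KKT (stationary) point of \eqref{cp_max}; ``local optimum'' in the proposition should be read in that sense, and the paper makes no attempt to prove more. If you want to keep your stronger claim you would need an additional second-order argument, which does not hold in general.
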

As each coordinate update increases the objective and the objective is
bounded above by $d$, convergence of this scheme is assured.  
Note, however, that this approach only converges to a
local optimum of \eqref{cp_max}, but this is true of all other
algorithmic approaches to  
solving the CP problem as well \citep{kolda_tensor_review}.

To compute multiple CP factors, one could apply this single-factor
approach sequentially to the residuals remaining after subtracting
out the previously computed factors.   This so called deflation
approach is 
closely related in structure to the power method for computing
eigenvectors \citep{golub_van_loan_1996}.  We then call this greedy method
the {\em Tensor Power Algorithm}.
Notice that this method does not enforce orthogonality
in subsequently computed components.  The algorithm can be easily
modified, however, to ensure orthogonality by employing a Graham-Schmidt
scheme \citep{golub_van_loan_1996}.

Before moving on to our Sparse CP method, we pause to discuss the
Tensor Power Method and compare it to more common algorithms to
compute the CP decomposition such as
the Alternating Least Squares algorithm
\citep{harshman_1970,carroll_cp_1970}. As the Tensor Power Algorithm is a
greedy approach, the first several factors computed will tend to
explain the most variance in the data.  In contrast, the CP-ALS
algorithm seeks to maximize the set of $d_{k}$ for all $K$ factors
simultaneously.  Thus, the set of $K$ CP-ALS factors may explain as much
variance as those of the Tensor Power Algorithm, but the first several
factors typically explain much less.  This is illustrated on the tensor
microarray data in Section \ref{section_res}.  Also, while the
CP-ALS algorithm returns $d_{k}$ in descending order, the $d_{k}$
computed via the Tensor Power Method are not necessarily ordered.

\subsection{Sparse HOPCA via Deflation}

We introduce a novel deflation-based method for Sparse HOPCA that
incorporates 
sparsity by regularizing factors with an $\ell_{1}$-norm penalty.
Our method solves a direct relaxation of the CP optimization problem
\eqref{cp_max} and has a computationally attractive solution.

Many have sought to encourage sparsity in PCA by greedily solving
$\ell_{1}$-norm penalized optimization problems 
related the to rank-one SVD \citep{jolliffe_spca_2003, zou_spca_2006,
  shen_spca_2008, witten_pmd_2009, lee_ssvd_2010, journee_spca_2010,
  allen_gmd_2011}.  We formulate our method similarly by
placing $\ell_{1}$-norm penalties on each of the tensor factors of the
CP optimization problem 
\eqref{cp_max} and relaxing the equality constraints to inequalities.
In this manner, our approach is most similar to the two-way Sparse PCA
methods of \citet{witten_pmd_2009} and \citet{allen_gmd_2011}.
We define our single-factor Sparse CP-TPA  
(named to denote its relation to our Tensor Power Algorithm)
factorization as the solution to the following problem:
\begin{align}
\label{sparse_cp_prob}
\maximize_{\uvec, \vvec, \wvec} \ & \ \tenX \times_{1} \uvec
\times_{2} \vvec \times_{3} \wvec    - \lamu || \uvec ||_{1} -
\lamv || \vvec ||_{1} - \lamw || \wvec ||_{1} \nonumber \\
\textrm{subject to} \ & \ \uvec^{T} \uvec \leq 1, \vvec^{T} \vvec \leq 1,
\ \& \ \wvec^{T} \wvec \leq 1.
\end{align}
Here $\lamu$ , $\lamv$ and $\lamw$ are
non-negative regularization parameters controlling the amount of sparsity in
the tensor factors.  Relaxing the constraints greatly simplifies
the optimization problem, leading to a simple solution and algorithmic
approach:
\begin{theorem}
\label{thm_sparse_cp}
Let $\hat{\uvec} = S (\tenX \times_{2} \vvec \times_{3}
  \wvec, \lamu )$, $\hat{\vvec} = S (\tenX \times_{1} \uvec \times_{3}
  \wvec, \lamv )$, and $\hat{\wvec} = S (\tenX \times_{1} \uvec \times_{2}
  \vvec, \lamw )$ where $S(\cdot, \lambda) = \mathrm{sign}(\cdot)(|
  \cdot | - \lambda )_{+}$ is the soft-thresholding operator.   Then,
  the factor-wise solutions to the Sparse CP-TPA problem are
  given by: 
{\small \begin{align*}
\uvec^{*} =  \begin{cases} \frac{ \hat{\uvec}  }{ || \hat{\uvec} ||_{2}} &
  || \hat{\uvec} ||_{2} > 0 \\ 0 & \textrm{otherwise}, 
\end{cases}
\vvec^{*} =  \begin{cases} \frac{ \hat{\vvec}  }{ || \hat{\vvec} ||_{2}} &
  || \hat{\vvec} ||_{2} > 0 \\ 0 & \textrm{otherwise}, 
\end{cases}
\& \wvec^{*} =  \begin{cases} \frac{ \hat{\wvec}  }{ || \hat{\wvec} ||_{2}} &
  || \hat{\wvec} ||_{2} > 0 \\ 0 & \textrm{otherwise}, 
\end{cases} 
\end{align*}}
Each factor-wise update monotonically increases the objective and
when iterated, they converge to a local maximum of \eqref{sparse_cp_prob}.
\end{theorem}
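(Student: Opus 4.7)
The plan is to decouple the three-block problem into three one-block subproblems, solve each via KKT to obtain the closed-form update, and then appeal to block coordinate ascent on a separately concave problem for convergence.

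First, I fix $\vvec$ and $\wvec$ and set $\mathbf{b} := \tenX \times_{2} \vvec \times_{3} \wvec$. Since $\tenX \times_{1} \uvec \times_{2} \vvec \times_{3} \wvec = \uvec^{T} \mathbf{b}$, the $\uvec$-subproblem of \eqref{sparse_cp_prob} is
\[
\max_{\uvec} \ \uvec^{T} \mathbf{b} - \lamu \| \uvec \|_{1} \quad \text{subject to} \quad \uvec^{T} \uvec \leq 1.
\]
This maximizes a concave function over a convex compact set, so KKT conditions are necessary and sufficient. Introducing a multiplier $\mu \geq 0$ for the ball constraint, stationarity reads $\mathbf{b} - 2 \mu \uvec \in \lamu \, \partial \| \uvec \|_{1}$. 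Read componentwise this gives $2 \mu u_{i} = S(b_{i}, \lamu)$, so $\uvec$ is proportional to $S(\mathbf{b}, \lamu)$. Whenever $S(\mathbf{b}, \lamu) \neq 0$, complementary slackness activates the ball constraint and pins down the constant via $\| \uvec \|_{2} = 1$, yielding the stated formula $\uvec^{*} = S(\mathbf{b}, \lamu)/\| S(\mathbf{b}, \lamu) \|_{2}$. In the degenerate case $S(\mathbf{b}, \lamu) = 0$ one has $\| \mathbf{b} \|_{\infty} \leq \lamu$, and H\"older's inequality then gives $\uvec^{T} \mathbf{b} - \lamu \| \uvec \|_{1} \leq (\| \mathbf{b} \|_{\infty} - \lamu) \| \uvec \|_{1} \leq 0$, certifying $\uvec^{*} = 0$ as optimal. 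The analogous derivations for the $\vvec$- and $\wvec$-subproblems, after matricizing $\tenX$ along the corresponding mode, supply the other two formulas.

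For the ascent and convergence claims, each block update returns an exact global maximizer of its concave subproblem, so the objective values form a nondecreasing sequence along iterations. On the feasible product of three unit balls the trilinear term is bounded above by $\| \tenX \|_{F}$ while the $\ell_{1}$ penalties are nonpositive, so the objective is bounded above and the monotone sequence converges. Because the relaxed objective in \eqref{sparse_cp_prob} is separately concave in each of $\uvec$, $\vvec$, $\wvec$ over a compact convex product domain, standard block coordinate ascent theory then yields that every limit point is a block-coordinatewise maximizer, which is the notion of local maximum asserted in the theorem.

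The main obstacle is the careful KKT bookkeeping at the boundary of the ball: verifying that $\mu > 0$ precisely when $S(\mathbf{b}, \lamu) \neq 0$, and using H\"older's inequality to certify global optimality of $\uvec = 0$ in the degenerate case rather than relying on stationarity alone. Once the closed-form block maximizers are in hand, monotonicity and convergence follow routinely from the tri-concave structure of the relaxed objective together with compactness of the feasible set.
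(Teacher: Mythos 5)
Your proposal is correct and follows essentially the same route as the paper's proof: KKT conditions on the concave single-factor subproblem yield the soft-thresholded, normalized update, with complementary slackness fixing the scale and block coordinate ascent (Tseng-style) giving monotonicity and convergence. Your H\"older-inequality certification of the $\uvec^{*}=0$ case is a slightly more explicit treatment of a point the paper handles by simply noting that $(0,0)$ satisfies the KKT conditions, but it is not a different argument.
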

Thus, \eqref{sparse_cp_prob} can be solved a factor at a time by
soft-thresholding and then re-scaling the update.  While this type of
algorithmic scheme may seem familiar to some, a major contribution of
our work is determining the underlying optimization problem being solved
by this algorithmic structure.

There are many mathematical and computational advantages to this
approach.  First, notice that \eqref{sparse_cp_prob} is a concave
optimization problem in each factor with the other factors fixed.
Thus, it is a multi-way concave optimization problem, meaning that we
can solve for one factor at a time with the others fixed.  This scheme
yields a monotonic algorithm that converges to a local maximum of
\eqref{sparse_cp_prob} \citep{tseng_2001}.  This is important as, unlike
our algorithmic based Sparse HOPCA methods, the progress of the
algorithm can be tracked and convergence is assured.  Second, notice
that in \eqref{sparse_cp_prob}, the scale of the factors is directly
controlled.  Even with the relaxed constraints, the solution for each
factor is guaranteed to either have norm one or be set to zero.  This
is a major computational advantage as it leads to numerically
well-conditioned algorithmic updates and solutions.  Third, this
method is computationally attractive as each factor-wise update has a
simple analytical solution, and thus each update is computationally
inexpensive.  Additionally, this scheme requires far less computer
memory than our algorithmic Sparse HOPCA approaches as only factors
comprising the 
final solution need to be computed, and the tensor never needs to be
matricized.  The final advantage of this approach is its generality;
as we will show in the next section, there are many possible
extensions of this methodology.

To compute multiple factors, we employ a deflation approach as in the
Tensor Power Algorithm.  
Specifically, each factor is calculated by solving the single-factor CP
problem, \eqref{sparse_cp_prob}, for the residuals from the previously
computed single-factor solutions.  
Notice that we do not enforce orthogonality in the
factors.  In fact, many have advocated against enforcing orthogonality of
sparse PCs \citep{zou_spca_2006, shen_spca_2008,
  journee_spca_2010}.  For factors with $\lambda = 0$, however, if
orthogonality is desired, this can be accomplished by altering the
factor updates as described in Section \ref{section_tpa}.  Thus, we
have developed a deflation approach to Sparse HOPCA by greedily
solving single-factor optimization problems related to the CP
criterion.  This Sparse CP-TPA method enjoys mathematical advantages
such as assured convergence to a local optimum and is computationally
attractive for high-dimensional tensors.

\subsubsection{Selecting Dimension and Regularization Parameters}

Another important item of consideration for our Sparse CP-TPA method
and all of our Sparse HOPCA
methods is the number of factors, $K$, to compute for a given data
sets.  Several have proposed heuristic methods for choosing $K$ in
HOPCA (see \citet{kolda_tensor_review} and \citet{kroonenberg_2008}
and references therein). 
While several non-heuristic methods exist for PCA on matrix data
\citep{buja_1992, owen_2009_cv},
extending these to the tensor framework is non-trivial and beyond the
scope of this paper.

Our Sparse CP-TPA problem has three regularization parameters that
control the amount of sparsity in the factors.  Several methods
exist for selecting these regularization parameters in the Sparse PCA
literature \citep{troyanskaya_2001, owen_2009_cv,
  shen_spca_2008, lee_ssvd_2010}. As cross-validation can be slow to
run for high-dimensional tensors, we choose to select regularization
parameters via the Bayesian Information Criterion (BIC)
\citep{lee_ssvd_2010, allen_gmd_2011}:
$\lambda_{\uvec}^{*} = \argmin_{\lambda_{\uvec}} BIC( \lambda_{\uvec} )$ where 
$BIC(\lambda_{\uvec} ) = \mathrm{log}\left(\frac{|| \tenX - d \uvec \circ
  \vvec \circ \wvec ||_{F}^{2} }{npq}\right) +
\frac{\mathrm{log}(npq)}{npq} | \{ \uvec \} |$, where $| \{ \uvec\} |$
is the number of non-zero elements of $\uvec$.  This BIC formulation
can be derived by considering that each update of the Sparse CP-TPA
algorithm solves an $\ell_{1}$-norm penalized regression
problem. Selection criteria for $\vvec$ and $\wvec$ are analogous.
Similar BIC-based selection methods can be derived for our previous
algorithmic Sparse HOPCA methods as well.
Experimental results evaluating the efficacy of this regularization parameter
selection method are given in Section \ref{section_sims}.

\section{Extensions}
\label{section_ext}

As our frameworks introduced for Sparse HOPCA
are general, there are many possible extensions of our methodology.
In this section, we outline novel results and methodology for three
extensions we consider most important: general penalties and
non-negativity, generalizations for structured tensors, and multi-way
functional HOPCA.  Our intent here is to refrain from fully describing
each development, but instead to give the reader enough details to
understand and use each of these extensions.

\subsection{General Penalties \& Non-negativity}

In certain applications, one may wish to regularize tensor factors
with a penalty other than an $\ell_{1}$-norm.  Consider the following
optimization problem which incorporates general penalties,
$P_{\uvec}()$, $P_{\vvec}()$ and $P_{\wvec}()$:
\begin{align}
\label{cp_prob_reg}
\maximize_{\uvec, \vvec, \wvec} \ & \ \tenX \times_{1} \uvec
\times_{2} \vvec \times_{3} \wvec   - \lamu P_{\uvec}( \uvec ) -
\lamv P_{\vvec}( \vvec ) - \lamw P_{\wvec}( \wvec ) \nonumber \\
\textrm{subject to} \ & \ \uvec^{T} \uvec \leq 1, \vvec^{T} \vvec \leq 1,
\ \& \ \wvec^{T} \wvec \leq 1.
\end{align}
An extension of a
result in \citet{allen_gmd_2011} reveals that one may solve this
optimization problem for general penalties that are convex and order
one by solving penalized regression problems:
\begin{theorem}
\label{thm_cp_gen_pen}
Let $P_{\uvec}()$, $P_{\vvec}()$ and $P_{\wvec}()$ be convex and
homogeneous or order one.  Consider the following penalized regression
problems:
$\hat{\uvec} = argmin_{\uvec} \{ \frac{1}{2} || \tenX \times_{2}
\hat{\vvec} \times_{3} \hat{\wvec} - \uvec ||_{2}^{2} + \lamu
P_{\uvec}( \uvec ) \}$, $\hat{\vvec} = argmin_{\vvec} \{ \frac{1}{2} || \tenX \times_{1}
\hat{\uvec} \times_{3} \hat{\wvec} - \vvec ||_{2}^{2} + \lamv
P_{\vvec}( \vvec ) \}$, and
$\hat{\wvec} = argmin_{\wvec} \{ \frac{1}{2} || \tenX \times_{1}
\hat{\uvec} \times_{2} \hat{\vvec} - \wvec ||_{2}^{2} + \lamw
P_{\wvec}(\wvec ) \}$.
Then, the block coordinate-wise solutions for
\eqref{cp_prob_reg} are given by:
{\small
\begin{align*}
\uvec^{*} = \begin{cases} \frac{\hat{\uvec}}{||\hat{\uvec} ||_{2}} &
  || \hat{\uvec} ||_{2} > 0 \\ 0 & \textrm{otherwise} \end{cases},  \vvec^{*} = \begin{cases} \frac{\hat{\vvec}}{||\hat{\vvec} ||_{2}} &
  || \hat{\vvec} ||_{2} > 0 \\ 0 & \textrm{otherwise} \end{cases}, 
\& 
 \wvec^{*} = \begin{cases} \frac{\hat{\wvec}}{||\hat{\wvec} ||_{2}} &
  || \hat{\wvec} ||_{2} > 0 \\ 0 & \textrm{otherwise} \end{cases}. \hspace{8mm}
\end{align*}}
Iteratively updating these factors converges to a local optimum of
\eqref{cp_prob_reg}. 
\end{theorem}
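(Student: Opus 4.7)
The plan is to reduce the proof to an application of block-coordinate maximization, where the main technical content is showing that each single-block subproblem of \eqref{cp_prob_reg} is solved by the normalization of the corresponding penalized regression solution. This is the multi-way analog of the Sparse CP-TPA argument in Theorem \ref{thm_sparse_cp}, but now with a general convex, order-one homogeneous penalty in place of the $\ell_1$-norm.

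First I would fix $\vvec$ and $\wvec$ and reduce the $\uvec$-subproblem of \eqref{cp_prob_reg} to
\begin{align*}
\maximize_{\uvec} \ \uvec^{T} \tilde{\x} - \lamu P_{\uvec}(\uvec) \quad \text{subject to} \ \uvec^{T}\uvec \leq 1,
\end{align*}
where $\tilde{\x} = \tenX \times_{2} \vvec \times_{3} \wvec$. Because $P_{\uvec}$ is convex and the feasible set is convex, this is a concave maximization in $\uvec$, and analogous subproblems hold for $\vvec$ and $\wvec$. The next step, which is the heart of the argument, is to show the solution coincides with the normalized penalized regression update. Using the homogeneity $P_{\uvec}(s\zvec) = s P_{\uvec}(\zvec)$ for $s \geq 0$, I would reparametrize $\uvec = s\zvec$ with $\|\zvec\|_{2}=1$ and $s \in [0,1]$, so the objective becomes $s[\zvec^{T}\tilde{\x} - \lamu P_{\uvec}(\zvec)]$. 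Maximizing in $s$ yields $s^{*}=1$ whenever $\zvec^{T}\tilde{\x} - \lamu P_{\uvec}(\zvec) > 0$ and $s^{*}=0$ otherwise, so the optimal direction is the unit vector maximizing $\zvec^{T}\tilde{\x} - \lamu P_{\uvec}(\zvec)$.

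Next I would connect this to the stated penalized regression. Expanding $\tfrac{1}{2}\|\tilde{\x}-\uvec\|_{2}^{2} + \lamu P_{\uvec}(\uvec)$ and again writing $\uvec = s\zvec$ gives
\begin{align*}
\tfrac{1}{2}\|\tilde{\x}\|_{2}^{2} + \tfrac{s^{2}}{2} - s\bigl(\zvec^{T}\tilde{\x} - \lamu P_{\uvec}(\zvec)\bigr).
\end{align*}
Minimizing in $s \geq 0$ for fixed $\zvec$ gives $s^{*} = \max\{0,\zvec^{T}\tilde{\x} - \lamu P_{\uvec}(\zvec)\}$, and plugging back leaves the objective as a decreasing function of $\zvec^{T}\tilde{\x} - \lamu P_{\uvec}(\zvec)$. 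Hence the direction of the regression minimizer $\hat{\uvec}$ is exactly the maximizer of $\zvec^{T}\tilde{\x} - \lamu P_{\uvec}(\zvec)$ over $\|\zvec\|_{2}=1$ when $\hat{\uvec} \neq 0$, and the constrained subproblem has optimum $\hat{\uvec}/\|\hat{\uvec}\|_{2}$. When $\hat{\uvec}=0$, the maximum of $\zvec^{T}\tilde{\x}-\lamu P_{\uvec}(\zvec)$ over the unit sphere is nonpositive, so $s^{*}=0$ is optimal and $\uvec^{*}=0$. This matches the stated update, and the analogous arguments hold verbatim for $\vvec$ and $\wvec$.

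For convergence, since each block subproblem is concave with a convex constraint set and has a unique maximizer when $\hat{\uvec}\neq 0$ (and the zero solution otherwise), the block-coordinate updates monotonically increase the objective of \eqref{cp_prob_reg}. The objective is bounded above on the compact feasible region $\{\|\uvec\|_{2},\|\vvec\|_{2},\|\wvec\|_{2}\leq 1\}$, and by Theorem 4.1 of \citet{tseng_2001} applied to this tri-concave maximization with separable constraints, every limit point is a coordinate-wise maximum, i.e.\ a local optimum of \eqref{cp_prob_reg}. The main obstacle I anticipate is the scale-direction decoupling step: the homogeneity of order one is essential here because it is what makes the regression and constrained problems share the same maximizing direction, and without it the reduction to a regression update would fail. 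The edge case $\|\hat{\uvec}\|_{2}=0$ needs particular care to confirm it corresponds precisely to the constrained optimum being zero rather than some boundary artifact.
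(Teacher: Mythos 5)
Your proof is correct, but it takes a genuinely different route from the paper's. The paper verifies the claimed update by matching KKT systems: it writes the subgradient stationarity condition of the constrained subproblem of \eqref{cp_prob_reg} with a Lagrange multiplier $\gamma$ on $\uvec^{T}\uvec \leq 1$, writes the subgradient equation of the penalized regression problem, and uses the fact that for a convex, degree-one homogeneous $P_{\uvec}$ the subgradient is invariant under positive rescaling ($\nabla P_{\uvec}(c\x) = \nabla P_{\uvec}(\x)$ for $c>0$) to show that the pair $(\uvec^{*} = \hat{\uvec}/\|\hat{\uvec}\|_{2},\ \gamma^{*} = \|\hat{\uvec}\|_{2}/2)$ satisfies the KKT conditions of \eqref{cp_prob_reg}, with $(0,0)$ handling the degenerate case via complementary slackness. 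You instead decouple scale and direction directly: writing $\uvec = s\zvec$ with $\|\zvec\|_{2}=1$ and profiling out $s$ in both the constrained subproblem and the regression problem, you show both reduce to maximizing $\zvec^{T}\tilde{\x} - \lamu P_{\uvec}(\zvec)$ over the unit sphere, so they share the same optimal direction, and the sign of that maximum cleanly characterizes the $\hat{\uvec}=0$ case. Your argument is more self-contained and transparent (it establishes equivalence of the optimizers and optimal values without invoking the ray-invariance of subdifferentials, which the paper uses without justification), while the paper's KKT route generalizes more directly to the generalized-norm setting of Proposition \ref{prop_ghopca}. One small overstatement on your side: the block subproblem need not have a \emph{unique} maximizer when $\hat{\uvec}\neq 0$ (e.g., for non-strictly-convex penalties with ties), so the appeal to \citet{tseng_2001} should rest on the concavity/separability hypotheses rather than uniqueness; this matches the level of care in the paper's own convergence claim and does not affect the correctness of the coordinate-wise solution formulas.
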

An example of a possible penalty type of interest in many tensor
applications is the group lasso,
which encourages sparsity in groups of variables \citep{yuan_2006_group}.

Much attention in the literature has been given to the non-negative
and sparse non-negative tensor decompositions
\citep{hazan_2005, shashua_2005, morup_sp_tucker_2008, cichocki_2009,
  liu_sp_nn_2011}. These  
techniques have been used for multi-way clustering of tensor data. A
simple modification of Theorem \ref{thm_cp_gen_pen} allows us to solve
\eqref{sparse_cp_prob} when non-negativity constraints are added for
each factor: Replace the soft-thresholding function $S( x, \rho) =
\mathrm{sign}( x) ( | x | - \rho )_{+}$
with the positive-thresholding function $P( x, \rho) = ( x -
\rho )_{+}$ \citep{allen_sp_nn_gpca_2011}.  This, then, is a
computationally attractive alternative to estimating sparse
non-negative tensor factors.  Extensions of our deflation approach for
Sparse CP-TPA  allow for regularized HOPCA with general
penalties and non-negativity constraints.

\subsection{Generalized HOPCA}

For structured data, \citet{allen_gmd_2011} showed that generalizing
PCA by working with an alternative matrix norm capturing the known
structure can dramatically improve results.  The same techniques
generalizing PCA can be used to generalized HOPCA for structured
tensors.  Examples of these include image data as seen in
neuroimaging, microscopy and hyper-spectral imaging as well as
multi-dimensional NMR spectroscopy, remote sensing, and
environmetrics.  Here, we illustrate how to extend Generalized PCA to
the framework introduced for Sparse CP-TPA.
Generalized Tucker, Sparse HOSVD, and Sparse HOOI methods are a
straightforward extension of these approaches.

The Generalized PCA optimization problem of \citet{allen_gmd_2011}
replaces minimizing a Frobenius norm with a transposable-quadratic
norm comprised of two quadratic operators in the row and column space
of the data matrix.  To generalize a three-way Frobenius norm, let
$\Q^{(1)} \in \Re^{n \times n}$, $\Q^{(2)} \in \Re^{p \times p}$ and
$\Q^{(3)} \in \Re^{q \times q}$ and define the three-way quadratic
norm as the following: $|| \tenX ||_{\Q^{(1)},\Q^{(2)},\Q^{(3)}} =
\left( \sum_{i=1}^{n} 
   \sum_{i'=1}^{n} \sum_{j=1}^{p} \sum_{j'=1}^{p} \sum_{k=1}^{q}
   \sum_{k'=1}^{q} \Q^{(1)}_{ii'} \Q^{(2)}_{jj'} \Q^{(3)}_{kk'}
   \tenX_{ijk} \tenX_{i'j'k'} \right)^{1/2} = \left( \sum_{i=1}^{n}
   \sum_{j=1}^{p} 
   \sum_{k=1}^{q} \tilde{\tenX}_{ijk} \tenX_{ijk} \right)^{1/2}$,
where $\tilde{\tenX} = \tenX \times_{1} \Q^{(1)} \times_{2} \Q^{(2)}
\times_{3} \Q^{(3)}$. A major assumption implied by this three-way
quadratic norm is separability of the structured errors along each of
the tensor modes.  In fact, one can show that this norm implies a
three-way Kronecker covariance structure related to the array-normal
distribution \citep{hoff_array_norm_2011}. Give this, we define the
rank-one Generalized CP 
decomposition as the solution to the following optimization problem:
\begin{align}
\label{gcp}
\minimize_{\uvec,\vvec,\wvec,d} \ &  \ \frac{1}{2} || \tenX - 
d \uvec \circ \vvec \circ \wvec
||_{\Q^{(1)},\Q^{(2)},\Q^{(3)}}^{2} \nonumber \\
  \textrm{subject to} \ & \ 
\uvec^{T} \Q^{(1)} \uvec = 1, \ \vvec^{T} \Q^{(2)} \vvec = 1, \
\wvec^{T} \Q^{(3)} \wvec = 1, \ \& \ d > 0.
\end{align}
In the same manner in which we motivated the Sparse CP-TPA
optimization problem, we can define the rank-one Sparse Generalized
CP-TPA  decomposition to be the solution to the following:
\begin{align}
\label{sparse_gcp}
\maximize_{\uvec,\vvec,\wvec,} \ & \ \tenX \times_{1} \Q^{(1)} \uvec
\times_{2} \Q^{(2)} \vvec \times_{3} \Q^{(3)} \wvec  - \lamu || \uvec
||_{1} - \lamv || \vvec ||_{1} - \lamw || 
\wvec ||_{1} \nonumber \\
 \textrm{subject to} \ & \ \uvec^{T} \Q^{(1)} \uvec \leq 1, \
 \vvec^{T} \Q^{(2)} \vvec \leq  1, \ \& \ 
\wvec^{T} \Q^{(3)} \wvec \leq 1.
\end{align}

Following from results in \citep{allen_gmd_2011}, these rank-one
problems have closed form solutions.  
\begin{proposition}
\label{prop_ghopca}
The coordinate-wise updates for $\uvec$ are:
\begin{enumerate}
\item Generalized CP: $\uvec^{*} = ( \tenX \times_{2} \Q^{(2)} \vvec
  \times_{3} \Q^{(3)} \wvec ) / || ( \tenX \times_{2} \Q^{(2)} \vvec
  \times_{3} \Q^{(3)} \wvec ) ||_{\Q^{(1)}}$.
\item Sparse Generalized CP: Define $\hat{\uvec} = argmin_{\uvec} \{
  \frac{1}{2} || \tenX \times_{2} 
\Q^{(2)} \vvec \times_{3} \Q^{(3)} \wvec - \uvec ||_{\Q^{(1)}}^{2} +
  \lamu || \uvec ||_{1} \}$, then $\uvec^{*} = \hat{\uvec} / ||
  \hat{\uvec} ||_{\Q^{(1)}}$ if $|| \hat{\uvec} ||_{\Q^{(1)}} > 0$, and
  $\uvec^{*} = 0$ otherwise.  
\end{enumerate}
The updates for $\vvec$ and $\wvec$ are analogous.  Together these
updates converge to a local solution to the Generalized CP or Sparse
Generalized CP problem.  
\end{proposition}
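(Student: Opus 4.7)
The plan is to reduce both claims in Proposition \ref{prop_ghopca} to the analogs of the arguments already established for the rank-one CP problem and its sparse version, with the ordinary Frobenius inner product replaced throughout by the $\Q^{(i)}$-weighted inner product on mode $i$. First I would verify the generalized analog of \eqref{cp_max}: expanding the quadratic-norm squared in \eqref{gcp} and using the three generalized unit-norm constraints yields $\tfrac{1}{2}\|\tenX\|_{\Q^{(1)},\Q^{(2)},\Q^{(3)}}^{2} - d\,\tenX \times_{1}\Q^{(1)}\uvec \times_{2}\Q^{(2)}\vvec \times_{3}\Q^{(3)}\wvec + \tfrac{d^{2}}{2}$, so optimizing out $d$ recasts \eqref{gcp} as the maximization of $\tenX \times_{1}\Q^{(1)}\uvec \times_{2}\Q^{(2)}\vvec \times_{3}\Q^{(3)}\wvec$ under those constraints, which is exactly the structure of \eqref{cp_max} but in the weighted inner product.

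For part (1), with $\vvec,\wvec$ fixed let $\zvec = \tenX \times_{2}\Q^{(2)}\vvec \times_{3}\Q^{(3)}\wvec$. The $\uvec$-subproblem reduces to $\max\ \uvec^{T}\Q^{(1)}\zvec$ subject to $\uvec^{T}\Q^{(1)}\uvec = 1$. Since $\Q^{(1)}$ is positive definite, a direct Cauchy--Schwarz inequality in the $\Q^{(1)}$-inner product immediately gives the stated maximizer $\zvec/\|\zvec\|_{\Q^{(1)}}$, with equality attained there. The analogous updates for $\vvec$ and $\wvec$ follow by symmetry of the three modes.

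For part (2), I would mimic the duality argument behind Theorem \ref{thm_sparse_cp} and the penalized-regression reformulation of Theorem \ref{thm_cp_gen_pen}, carried out in the $\Q^{(1)}$-inner product. The key step is a KKT identity: if $\hat{\uvec}$ minimizes $\tfrac{1}{2}\|\zvec - \uvec\|_{\Q^{(1)}}^{2} + \lamu\|\uvec\|_{1}$, then $\Q^{(1)}\zvec \in \Q^{(1)}\hat{\uvec} + \lamu\,\partial\|\hat{\uvec}\|_{1}$, so for any $s \in \partial\|\hat{\uvec}\|_{1}$ the homogeneity identity $\hat{\uvec}^{T}s = \|\hat{\uvec}\|_{1}$ yields $\hat{\uvec}^{T}\Q^{(1)}\zvec - \lamu\|\hat{\uvec}\|_{1} = \|\hat{\uvec}\|_{\Q^{(1)}}^{2}$. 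Evaluating the sparse objective in \eqref{sparse_gcp} at $\uvec^{*} = \hat{\uvec}/\|\hat{\uvec}\|_{\Q^{(1)}}$ therefore gives the value $\|\hat{\uvec}\|_{\Q^{(1)}}$. To show this is optimal, I would substitute $\Q^{(1)}\zvec = \Q^{(1)}\hat{\uvec} + \lamu s$ into the objective for an arbitrary feasible $\uvec$, use $\uvec^{T}s \le \|\uvec\|_{1}$ (entrywise bound on the $\ell_{1}$ subgradient), and apply Cauchy--Schwarz in the $\Q^{(1)}$-inner product with $\uvec^{T}\Q^{(1)}\uvec \le 1$; this yields $\uvec^{T}\Q^{(1)}\zvec - \lamu\|\uvec\|_{1} \le \|\hat{\uvec}\|_{\Q^{(1)}}$. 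The degenerate case $\|\hat{\uvec}\|_{\Q^{(1)}} = 0$ is handled by observing that the same inequality then forces the constrained objective to be nonpositive, so $\uvec^{*} = 0$ is optimal.

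Iterative convergence then follows from a standard Tseng-style block coordinate ascent argument: each of the three factor updates is the unique maximizer of a strictly concave (in that block) subproblem, so the objective is monotone increasing, and it is bounded above by $\|\tenX\|_{\Q^{(1)},\Q^{(2)},\Q^{(3)}}$, ensuring convergence to a local maximum. The main obstacle will be the subgradient bookkeeping in part (2); the KKT identity and the matching upper bound must be coordinated with the relaxed inequality constraint, and the degenerate $\hat{\uvec} = 0$ branch must be handled separately. Everything else transfers cleanly from Proposition \ref{prop_tpa} and Theorems \ref{thm_sparse_cp}--\ref{thm_cp_gen_pen} by replacing the Euclidean inner product on mode $i$ with the $\Q^{(i)}$-weighted one.
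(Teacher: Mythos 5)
Your proposal is correct and follows essentially the same route as the paper: reduce \eqref{gcp} to maximizing the trilinear form $\tenX \times_{1} \Q^{(1)} \uvec \times_{2} \Q^{(2)} \vvec \times_{3} \Q^{(3)} \wvec$, solve each block through the $\Q^{(1)}$-weighted penalized regression, and use the scale-invariance of the $\ell_{1}$ subgradient to justify the renormalization. The only difference is presentational: where the paper verifies the KKT stationarity conditions with the explicit multiplier $\gamma^{*} = \| \hat{\uvec} \|_{\Q^{(1)}} / 2$, you supply a direct optimality certificate via Cauchy--Schwarz in the $\Q^{(1)}$-inner product together with the subgradient bound $\uvec^{T} s \leq \| \uvec \|_{1}$, which reaches the same conclusion in a slightly more self-contained way.
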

As with our TPA and Sparse CP-TPA methods, subsequent components can
be calculated via deflation.  Thus, we have outlined a generalization
of our results on HOPCA 
and Sparse HOPCA for use with structured tensor data.

\subsection{Multi-way Functional PCA}

Many examples of high-dimensional tensors have functional valued
elements.  Examples include multi-dimensional spectroscopy data
commonly studied in chemometrics, series of images measured over time
such as in neuroimaging, and
hyper-spectral imaging data.  For matrix data, many have used Functional
PCA (FPCA) to reduced the dimension and discover patterns in functional
valued data \citep{silverman_1996}.  Recently,
\citet{huang_twfpca_2009} extended the 
Functional PCA framework to encompass two-way functional data.  These same
techniques can be further extended to multi-way data to study the
examples of functional tensor data described above.

\citet{huang_twfpca_2009} elegantly showed that estimating functional principal
components by solving a bi-convex optimization problem and using
deflation is 
equivalent to two-way half-smoothing the data, an extension of the
FPCA half-smoothing result of \citet{silverman_1996}.  Both the optimization
approach and half-smoothing approach can be extended for multi-way
tensor data.  For tensors, however, these two approaches do not yield
equivalent tensor factorizations.  We briefly summarize these two approaches
and their resulting properties.   Consider functional data that has
been discretized, yielding the tensor, $\tenX \in \Re^{n \times p
  \times q}$.  Define $\Su = \mathbf{I}_{(n)} +
\Omeg_{\mathbf{\uvec}}$, $\Sv = \mathbf{I}_{(p)} +
\Omeg_{\mathbf{\vvec}}$, and $\Sw = \mathbf{I}_{(q)} +
\Omeg_{\mathbf{\wvec}}$, where $\Omeg$ is a smoothing matrix
commonly used in FPCA; examples include the squared second or fourth
differences matrices \citep{ramsay_2006}.  Then, we can define Tensor FPCA via
half-smoothing as follows: (1) Half smooth the data, $\tilde{\tenX}
\leftarrow \tenX \times_{1} \Su^{-1/2} \times_{2} \Sv^{-1/2}
\times_{3} \Sw^{-1/2}$; (2) Take the Tucker decomposition of
$\tilde{\tenX} \approx \tilde{\tenD} \times_{1} \tilde{\U} \times_{2}
\tilde{\V} \times_{3} \tilde{\W}$; (3) and half-smooth the components,
$\uvec_{k} \leftarrow \Su^{-1/2} \tilde{\uvec}_{k} $, $\vvec_{k}
\leftarrow \Sv^{-1/2} \tilde{\vvec}_{k}$, and $\wvec_{k} \leftarrow
\Sw^{-1/2} \tilde{\wvec}_{k}$.  We can also define the rank-one Tensor
FPCA as the solution to the following tri-convex optimization
problem:
\begin{align}
\label{tensor_fpca}
\minimize_{\uvec, \vvec, \wvec} \ \ || \tenX - \uvec \circ \vvec \circ
\wvec ||_{F}^{2} + \uvec^{T} \Su \uvec \vvec^{T} \Sv \vvec \wvec^{T}
\Sw \wvec - || \uvec ||_{2}^{2} || \vvec ||_{2}^{2} || \wvec ||_{2}^{2}.
\end{align}
Subsequent components can be calculated via deflation.  
\begin{proposition}
\label{prop_hofpca}
\begin{enumerate}
\item The coordinate-wise updates converging to a local minimum of
  \eqref{tensor_fpca} are given by: $\uvec =  \Su^{-1} \left( \tenX \times_{2} \vvec \times_{3} \wvec
\right)  /  ( \vvec^{T} \Sv \vvec \wvec^{T} \Sw \wvec )$,
with the updates for $\vvec$ and $\wvec$ are defined analogously. 
\item The solution to \eqref{tensor_fpca} is not equivalent to Tensor
  FPCA via half-smoothing.  In other words, the latter is not
  necessarily a local minimum to \eqref{tensor_fpca}.
\end{enumerate}
\end{proposition}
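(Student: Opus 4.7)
For part (1), the plan is to expand $\|\tenX - \uvec \circ \vvec \circ \wvec\|_{F}^{2}$ and use the identity $\|\uvec \circ \vvec \circ \wvec\|_{F}^{2} = \|\uvec\|_{2}^{2}\|\vvec\|_{2}^{2}\|\wvec\|_{2}^{2}$; the expansion produces an exact cancellation with the $-\|\uvec\|_{2}^{2}\|\vvec\|_{2}^{2}\|\wvec\|_{2}^{2}$ correction in the objective of \eqref{tensor_fpca}, reducing the problem to
\[
\minimize_{\uvec,\vvec,\wvec}\ \ \|\tenX\|_{F}^{2} \,-\, 2\bigl(\tenX \times_{1} \uvec \times_{2} \vvec \times_{3} \wvec\bigr) \,+\, (\uvec^{T} \Su \uvec)(\vvec^{T} \Sv \vvec)(\wvec^{T} \Sw \wvec).
\]
With $\vvec$ and $\wvec$ fixed this is a strictly convex quadratic in $\uvec$ with Hessian $2(\vvec^{T} \Sv \vvec)(\wvec^{T} \Sw \wvec)\,\Su\succ 0$, so setting the gradient to zero yields the stated closed-form update; the formulas for $\vvec$ and $\wvec$ follow by permuting the roles of the modes. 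Because each $\Omeg_{\uvec},\Omeg_{\vvec},\Omeg_{\wvec}$ is positive semidefinite, $(\uvec^{T} \Su \uvec)(\vvec^{T} \Sv \vvec)(\wvec^{T} \Sw \wvec) \geq \|\uvec\|_{2}^{2}\|\vvec\|_{2}^{2}\|\wvec\|_{2}^{2}$, so the objective of \eqref{tensor_fpca} equals $\|\tenX - \uvec \circ \vvec \circ \wvec\|_{F}^{2}$ plus a nonnegative quantity and is bounded below by zero. Each block update strictly decreases this bounded objective, and the tri-convex block coordinate descent argument of Tseng (2001)---already invoked in the proof of Theorem~\ref{thm_sparse_cp}---then delivers convergence to a coordinate-wise local minimum.

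For part (2), I would prove non-equivalence by an explicit counterexample, specializing to the smoothing-free case $\Omeg_{\uvec} = \Omeg_{\vvec} = \Omeg_{\wvec} = 0$, so that $\Su = \Sv = \Sw = \I$. In this regime half-smoothing leaves $\tenX$ unchanged, the half-smoothing output reduces to the ordinary HOSVD factors of $\tenX$, and \eqref{tensor_fpca} collapses to the unpenalized rank-one CP fit $\min \|\tenX - \uvec \circ \vvec \circ \wvec\|_{F}^{2}$, whose local minima must satisfy $\uvec \propto \tenX \times_{2} \vvec \times_{3} \wvec$ and its two cyclic analogues. It therefore suffices to exhibit a tensor whose mode-wise leading singular vectors violate this fixed-point system. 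I would take the $2 \times 2 \times 2$ tensor with $\tenX_{1,1,1} = \tenX_{2,2,1} = \tenX_{1,2,2} = 1$ and zeros elsewhere: the three matricization SVDs return $\tilde{\uvec} = \mathbf{e}_{1}$, $\tilde{\vvec} = \mathbf{e}_{2}$, $\tilde{\wvec} = \mathbf{e}_{1}$, whose contracted core $\tenX \times_{1} \tilde{\uvec} \times_{2} \tilde{\vvec} \times_{3} \tilde{\wvec} = \tenX_{1,2,1} = 0$ forces the HOSVD rank-one reconstruction to be the zero tensor with \eqref{tensor_fpca} value $\|\tenX\|_{F}^{2} = 3$. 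The competing rank-one candidate $\mathbf{e}_{1} \circ \mathbf{e}_{1} \circ \mathbf{e}_{1}$ achieves the strictly smaller value $3 - 2 + 1 = 2$, so the half-smoothed output cannot be a local minimum of \eqref{tensor_fpca}.

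The main subtlety I anticipate is definitional: ``Tucker decomposition'' in step~(2) of the half-smoothing recipe admits several standard computational variants (HOSVD, HOOI, best rank-one Tucker), and a fully general proof would need to dispatch each case separately and would have to navigate the arbitrary rescaling of the rank-one triple $(\uvec,\vvec,\wvec)$ that absorbs the core scalar $\tilde{d}$. Specializing to the identity-smoothing setting and selecting a concrete $2\times 2 \times 2$ tensor whose mode-wise leading singular vectors are mutually orthogonal to the support of $\tenX$ neatly sidesteps both difficulties: the HOSVD reconstruction degenerates to the zero tensor irrespective of scaling, and any rank-one candidate supported on a nonzero entry of $\tenX$ produces a strictly lower objective, delivering a one-line contradiction to local optimality.
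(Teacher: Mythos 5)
Your part (1) is correct and follows essentially the same route as the paper: you cancel the $||\uvec||_{2}^{2}||\vvec||_{2}^{2}||\wvec||_{2}^{2}$ correction against the third term of the expanded Frobenius norm and set the block gradient of the reduced objective to zero; the paper performs the identical computation by differentiating \eqref{tensor_fpca} directly, with the cancellation occurring inside the gradient. For part (2) you take a genuinely different route. The paper argues structurally: half-smoothing forces $\uvec_{1}$ to be proportional to the first column of $\Su^{-1}(\tenX \times_{2} \V \times_{3} \W)\tenD$, and since the Tucker core $\tenD$ is non-diagonal this cannot match the stationarity condition $\uvec \propto \Su^{-1}(\tenX \times_{2} \vvec \times_{3} \wvec)$ from part (1). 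You instead specialize to $\Omeg_{\uvec}=\Omeg_{\vvec}=\Omeg_{\wvec}=0$ and build an explicit $2\times 2\times 2$ counterexample; your singular-vector and core computations check out, and a concrete counterexample is in some ways more convincing than the paper's informal fixed-point argument (though note that with zero smoothers the claim degenerates to the known fact that the HOSVD does not give the best rank-one approximation).

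There is, however, a genuine logical gap in how you close part (2): from the fact that $\mathbf{e}_{1}\circ\mathbf{e}_{1}\circ\mathbf{e}_{1}$ attains objective value $2$ while the half-smoothed output attains $3$, you conclude the latter ``cannot be a local minimum.'' That inference only rules out global optimality; a point can be a strict local minimum while a far-away point does better, so the comparison as written does not prove the proposition. The repair is easy in your example and should replace the global value comparison. At the unit-scaled output $(\mathbf{e}_{1},\mathbf{e}_{2},\mathbf{e}_{1})$, the gradient of your reduced objective with respect to $\uvec$ is $-2\,\tenX\times_{2}\mathbf{e}_{2}\times_{3}\mathbf{e}_{1} + 2\uvec = -2\mathbf{e}_{2}+2\mathbf{e}_{1}\neq 0$, so the first-order conditions fail and the point is not a local minimum. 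Alternatively, for the zero-scaled output, along the curve $(t^{1/3}\mathbf{e}_{1},t^{1/3}\mathbf{e}_{1},t^{1/3}\mathbf{e}_{1})$ the objective equals $3-2t+t^{2}<3$ for all small $t>0$, so every neighborhood of the zero triple contains strictly better points. Either one-line local argument completes your proof.
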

Although half-smoothing and the solution to \eqref{tensor_fpca} are
not equivalent, they both solve their respective optimization problems
and thus the problem of non-convergence as with Sparse HOOI is
avoided.  Thus, methods for Tensor FPCA are another possible extension
of our frameworks developed for Sparse HOPCA.

\section{Results}
\label{section_res}

We compare the performance of the various methods introduced for HOPCA
and Sparse HOPCA in terms of signal recovery and feature
selection on simulated data and in terms of dimension reduction and
feature selection on a microarray and functional MRI example. 

\subsection{Simulation Studies}
\label{section_sims}

\begin{figure}[!!t]
\begin{center}
\includegraphics[width=6.5in,clip=true,trim=1in 0in 1in 0in]{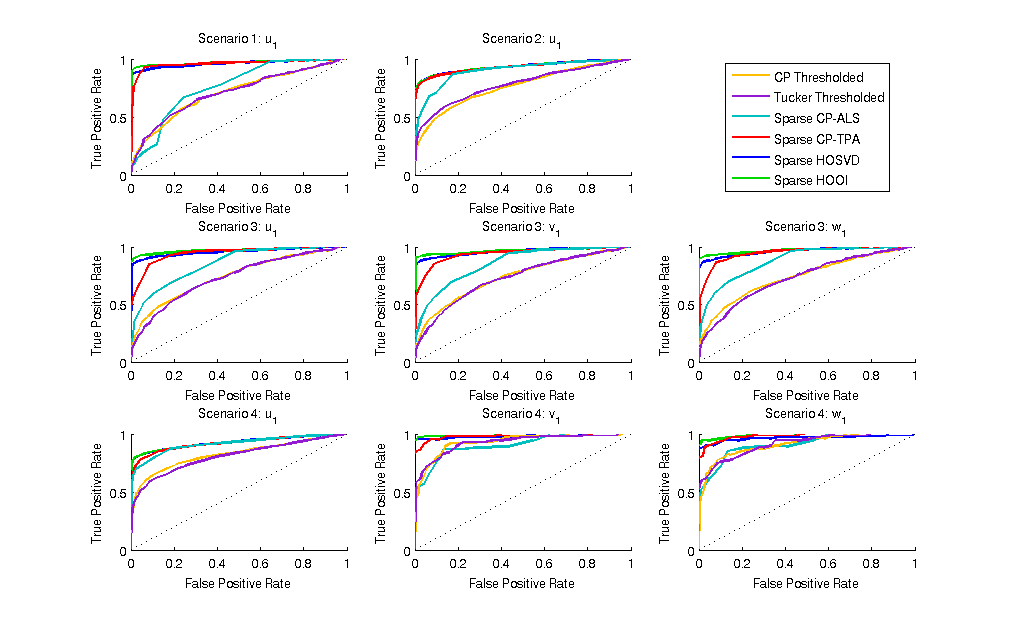}
\caption{\em \footnotesize ROC curves averaged over 50 replicates for
  each of the four simulation scenarios described in Section
  ~\ref{section_sims}.  The Sparse CP-TPA method performs similarly to
  Sparse HOSVD and Sparse HOOI in all but Scenario 3 where the later two
  have slight advantages.  The Sparse
  CP-ALS method, on the other hand, exhibits poor recovery of the true
  features, with performance often comparable to naive thresholding.}
\label{fig_roc}
\end{center}
\end{figure}

We evaluate the comparative performance of our methods on a simulated
low rank 
three-way tensor model comprised of sparse rank-one factors.  All data
is simulated from the following model: $\tenX = \sum_{k=1}^{K} d_{k} \uvec_{k} \circ
\vvec_{k} \circ \wvec_{k} + \tenE$, where the factors $\uvec_{k},
\vvec_{k}$, and $\wvec_{k}$ are random, $d_{k}$ is fixed and
$\tenE_{i,j,l}  \overset{iid}{\sim} N(0, 1)$. Four scenarios are
simulated and summarized as follows.  Scenario 1: $100 \times 100
\times 100$ with $\U$ sparse; Scenario 2: $1000 \times 20 \times 20$
with $\U$ sparse;  Scenario 3: $100 \times 100 \times 100$ with $\U$, 
$\V$ and $\W$ sparse; and Scenario 4: $1000 \times 20 \times 20$ with
$\U$, $\V$ and $\W$ sparse. 
Sparse factors are simulated with 50\% randomly selected elements set
to zero and non-zero values are i.i.d. $N(0,1)$. Non-sparse factors are
simulated as the first $K$ left and right singular vectors of a data
matrix with i.i.d. $N(0,1)$ entries.  In simulations where $K=1$,
$d_{1} = 100$.  In simulations where $K = 2$, $d_{1} = 200$ and $d_{2}
= 100$.  Additional simulation results for these four scenarios with
various signal to noise levels and a differing percentages of
sparse elements are given in the Supplemental Materials.

First, we test the accuracy of our methods in selecting the correct
non-zero features.  Receiver-operator curves (ROC) averaged over fifty
replicates 
computed by varying the regularization parameter, $\lambda$, 
are given for each of the four scenarios in Figure ~\ref{fig_roc}.  We
compare the Sparse HOSVD, Sparse HOOI, Sparse CP-ALS, and Sparse
CP-TPA to naive thresholding of the CP and Tucker decompositions that
act as a baseline.  In this paper, we implement the Sparse HOSVD and
Sparse HOOI using the Sparse PCA method of \citet{shen_spca_2008}.
From these comparisons, we see that the Sparse HOSVD and Sparse HOOI
consistently perform the best across all four simulation scenarios.
The Sparse CP-TPA method performs equally well in all but Scenario 3
where all factors are sparse and the samples sizes are equal.  The
Sparse CP-ALS method has considerably worse performance and barely
bests the naive thresholding methods.

\begin{table*}[!!t]
\scalebox{.8}{
\begin{tabular}{l|c|c|c|c|c|c}
\hline
& \multicolumn{6}{c}{Scenario 1} \\
\hline
& CP & Tucker & Sparse CP-ALS & Sparse CP-TPA & Sparse HOSVD & Sparse
HOOI \\
\hline
TP / FP $\uvec_{1}$ &   -   & - &  0.8308 / 0.8108  &  0.9332 / 0.0568
& 0.9372 / 0.0720   & 0.8472 / 0.0000 \\ 
TP / FP $\uvec_{2}$ & -    & - &     0.8204 / 0.8304   & 0.8688 /
0.0324 &   0.8960 / 0.2060 &   0.8704 / 0.0304 \\
Signal Recovery $\hat{\tenX}$ &   1.0052  &  1.0053  &  0.9993 &
0.0504  &  0.0501  &  0.0500 \\
\hline
& \multicolumn{6}{c}{Scenario 2} \\
\hline
& CP & Tucker & Sparse CP-ALS & Sparse CP-TPA & Sparse HOSVD & Sparse
HOOI \\
\hline
TP / FP $\uvec_{1}$ &   -   & - &      0.9310 / 0.2149  &  0.8874 /
0.0186 &   0.8902 / 0.0197  &  0.8346 / 0.0002 \\
TP / FP $\uvec_{2}$ &   -   & - &       0.8707 / 0.2366  &  0.7373 /
0.0329 &   0.6954 / 0.0066 &   0.7805 / 0.0197  \\
Signal Recovery $\hat{\tenX}$ &        1.0068  &  1.0032   & 0.9981 &
0.1239  &  0.1236  &  0.1236 \\
\hline
& \multicolumn{6}{c}{Scenario 3} \\
\hline
& CP & Tucker & Sparse CP-ALS & Sparse CP-TPA & Sparse HOSVD & Sparse
HOOI \\
\hline
TP / FP $\uvec_{1}$ & -   & - &     0.9832 / 0.7068  &  0.9468 /
0.1620 &   0.9500 / 0.0776  &  0.9108 / 0.0012 \\
TP / FP $\uvec_{2}$ & -   & - &     0.9732 / 0.7008  &  0.9116 /
0.2380  &  0.9008 / 0.2072  &  0.8940 / 0.0732  \\
TP / FP $\vvec_{1}$ & -   & - &     0.9840 / 0.7000   & 0.9412/ 0.1696
&  0.9420 / 0.0784  &  0.9024 / 0.0012 \\
TP / FP $\vvec_{2}$ & -   & - &     0.9676 / 0.6872  &  0.9152 /
0.2392  &  0.9000 / 0.2220   & 0.8916 / 0.0748 \\
TP / FP $\wvec_{1}$ & -   & - &     0.9860 / 0.6868  &  0.9460 /
0.1684  &  0.9436 / 0.0744  &  0.9112 / 0.0000 \\
TP / FP $\wvec_{2}$ & -   & - &     0.9740 / 0.7148   & 0.9140 /
0.2524 &   0.9092 / 0.2484  &  0.8888 / 0.0612 \\
Signal Recovery $\hat{\tenX}$ &        1.0066  &  1.0060   & 0.9994 &
0.0503 &   0.0499 &   0.0496  \\
\hline
& \multicolumn{6}{c}{Scenario 4} \\
\hline
& CP & Tucker & Sparse CP-ALS & Sparse CP-TPA & Sparse HOSVD & Sparse
HOOI \\
\hline
TP / FP $\uvec_{1}$ & -   & - &      0.8972 / 0.1618  &  0.8617 / 0.0256  &  0.8876 / 0.0184  &  0.8335 / 0.0002 \\
TP / FP $\uvec_{2}$ & -   & - &     0.7987 / 0.1838  &  0.7986 / 0.1455  &  0.7083 / 0.0067 &   0.7860 / 0.0190 \\
TP / FP $\vvec_{1}$ & -   & - &       0.9780 / 0.5220  &  0.9320 / 0.0580  &  0.9760 / 0.1440  &  0.9400 / 0.0000 \\
TP / FP $\vvec_{2}$ & -   & - &        0.9400 / 0.4720  &  0.9080 / 0.1880   & 0.9700 / 0.5620  &  0.9520 / 0.1360 \\
TP / FP $\wvec_{1}$ & -   & - &      0.9840 / 0.4560  &  0.9260 / 0.0620  &  0.9660 / 0.1080  &  0.9120 / 0.0000 \\
TP / FP $\wvec_{2}$ & -   & - &        0.9360 / 0.4800 &   0.9000 /
0.1640 &   0.9660 / 0.6140  &  0.9380 / 0.1380 \\ 
Signal Recovery $\hat{\tenX}$ &         1.0090  &  1.0025  &  0.9997 &
0.1252  &  0.1238  &  0.1234 \\
\hline
\end{tabular}}
\label{tab_sim}
\caption{\em \footnotesize True and false positive rates (TP and FP) and signal
  recovery measured in mean 
squared error for the four simulation scenarios described in Section
  ~\ref{section_sims} averaged over 50 replicates.  The Sparse CP-TPA
  method performs similarly to Sparse HOSVD and Sparse HOOI, with the
  later having a slight advantage in some scenarios.  The Sparse
  CP-ALS method performs poorly both in terms of feature selection and
  signal recovery.} 
\end{table*}

Next in Table ~\ref{tab_sim}, we compare the performance of our
methods in terms of feature selection and low rank signal recovery,
measured in mean squared error, at
one value of the regularization parameter, $\lambda$.  To be
consistent, the BIC method was used to select $\lambda$ for all
methods. Again, the Sparse HOOI is the best performing method with the
Sparse CP-TPA method performing similarly in all but Scenario 3.
Simulation results under different percentages of sparsity and
different signal levels presented in the Supplemental Materials exhibit
similar behavior.  

Finally, we compare the time until convergence for each of our methods
for Sparse HOPCA in Table \ref{tab_time}.  Note that while convergence
cannot be mathematically guaranteed for the methods discussed in Section
\ref{section_heuristic}, in practice effective use of warm starts and
constraints on the change in factors permitted in each iteration often
yield convergent algorithms.  Timings were
carried out on a Intel Xeon X5680 3.33Ghz processor and methods were
coded as single-thread processes 
run in Matlab utilizing the Tensor Toolbox \citep{tensor_toolbox}.
From these timing results, we see that
the Sparse CP methods are considerably faster than those based on the
Tucker decomposition.  While the Sparse CP-ALS method is fastest, it
also has the worst performance.  Overall, in addition to having nice
mathematical properties, the Sparse CP-TPA method offers a good
compromise between fast computation and strong performance results in
terms of feature selection and signal recovery.

\begin{table}[!!t]
\begin{center}
\scalebox{.9}{
\begin{tabular}{|l|r|r|r|r|}
\hline
& $100 \times 100 \times 100$ & $1000 \times 20 \times 20$ & $250
\times 250 \times 250$ & $5000 \times 50 \times 50$ \\
\hline
Sparse CP-ALS &     0.075  &  0.030  &  1.665 &   3.240 \\
Sparse CP-TPA &   0.090 &   0.030 &   1.800 &   5.335 \\
Sparse HOSVD  &  0.720 &   0.330 &  13.355 &  24.300 \\
Sparse HOOI  &  0.835  &  0.430 &  14.230 &  26.950 \\
\hline
\end{tabular}}\end{center}
\caption{ \em \footnotesize Median time in seconds over ten replicates
  for convergence of 
  each algorithm for a fixed value of the regularization parameter.  A
  rank-one model in which each factor is 50\% sparse is employed.  The
  Sparse CP 
  methods are markedly faster than the Sparse Tucker methods. }
\label{tab_time}
\end{table}

\subsection{AGEMAP Microarray Example}

We analyze the high-dimensional AGEMAP microarray data set (publicly
available from {\tt
  http://www.grc.nia.nih.gov/branches/rrb/dna/agemap\_data.htm }) with
Sparse 
HOPCA.  This data consists of gene expression measurements for 8,932
genes measured for 16 tissue types on 40 mice of ages 1, 6, 16, or 24
months \citep{zahn_agemap_2007}.  As measurements for several mice are
missing for 
various tissues, we eliminate any tensor slices that are entirely
missing, yielding a data array of dimension $8932 \times 16 \times
22$.  Scientists seek to discover
relationships between tissue types of aging mice and
the subset of genomic patterns that contribute to these
relationships. These patterns in each tensor mode cannot be found by
simply applying PCA or Sparse PCA to the fattened tensor.

\begin{figure}
\includegraphics[width=6.5in,clip=true,trim=.75in 0in .75in 0in]{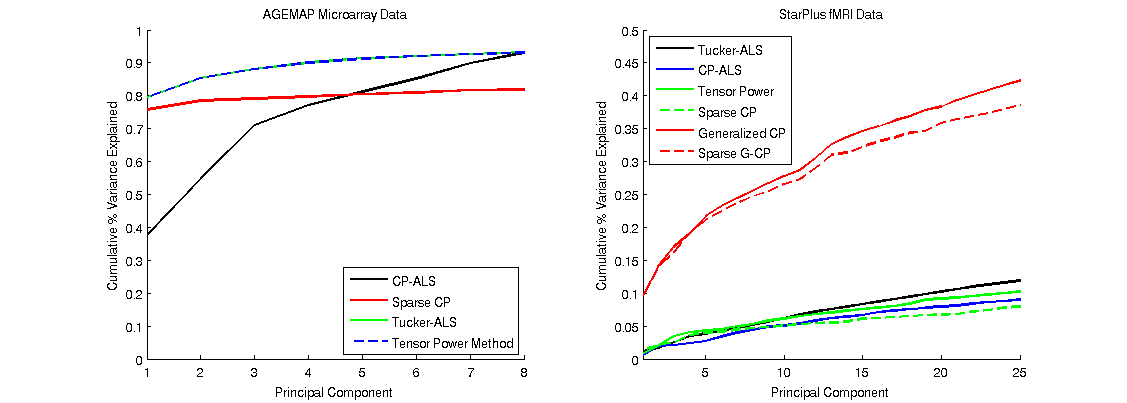}
\caption{\em \footnotesize Cumulative proportion of variance explained
by third order PCs on the AGEMAP microarray data (left) and the
StarPlus fMRI data (right).}
\label{fig_varex}
\end{figure}

The CP-TPA, CP-ALS, and Tucker decompositions as well as Sparse CP-TPA were
applied to this data to reduce the dimension and 
understand patterns among tissues and genes.  In the left panel of Figure
\ref{fig_varex}, the cumulative proportion of variance explained by
the first eight components is given.  Notice that the CP-ALS, CP-TPA, and
Tucker decompositions explain roughly the same proportion of variance
with eight components, but the CP-TPA and Tucker decomposition explain
much more initial variance in the first several PCs.  As often
scientists are only interested in 
the first couple principal components, this illustrates an important
advantage of our CP-TPA and Sparse CP-TPA approach as compared to CP-ALS
in the analysis of real data.

In Figure \ref{fig_agemap}, we explore patterns found in the
AGEMAP data via Sparse HOPCA.  The results were computed
using the Sparse CP-TPA method placing a penalty on the gene mode with the
BIC used to select $\lambda$.  In the top panel, we show
scatterplots of the first six 
PCs for the tissue mode.  We see many clusters of
tissue types for the various pairs of PCs.  For
example, adrenal, gonads and bones often cluster together.  As only a
subset of genes are selected by each of the PCs, we
can analyze the genetic patterns further for each tissue type.  Gonads
has a higher value for the second principal component, for example, so
we display a cluster heatmap of the 1,439 genes selected in PC2 for
this tissue type in the lower 
left panel of Figure \ref{fig_agemap}.    We see that the
genes selected 
by this sparse PC perfectly separate the male and female mice.  As
liver has a lower PC value for the third tissue component, we display the
cluster heatmap for the 514 genes selected by this component for liver
in the lower right panel of Figure 3.  Again, we see that this
component clusters 
the mice well according to their ages.  Further plots and analyses of this
type reveal subsets of important genes associated with various
tissues and mice ages or gender.  This type of multi-mode analysis is an
important advantage of applying Sparse HOPCA as opposed to Sparse PCA
on a flattened tensor.

\begin{figure}[!!th]
\begin{center}
\includegraphics[width=8in,clip=true,trim=2.2in 0in 2.5in 0in]{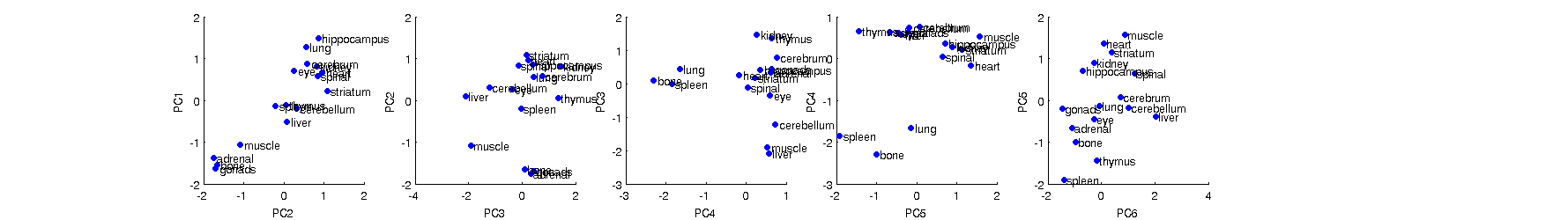} \\
\includegraphics[width=3.5in,clip=true,trim=.4in 0in .4in 0in]{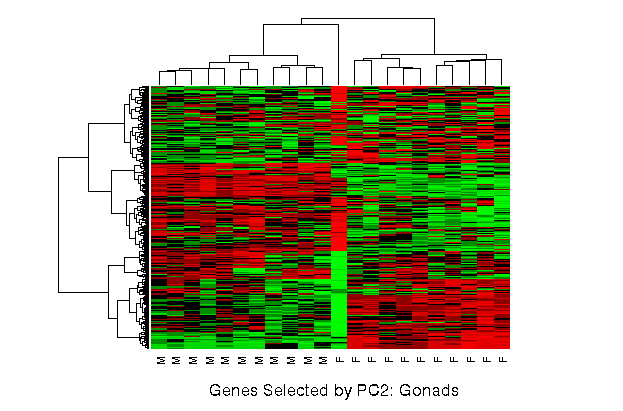}\includegraphics[width=3.5in,clip=true,trim=.4in 0in .4in 0in]{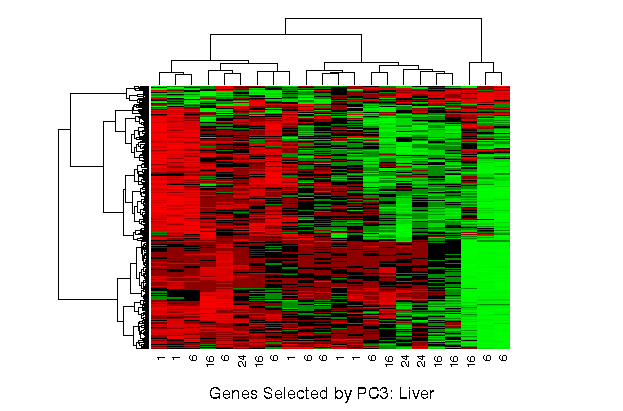}
\end{center}
\caption{ \em \footnotesize Analysis of AGEMAP Microarray data via
  Sparse Higher-Order PCA.  (Top Panel) Scatterplots of the first
  eight principal components for the 16 tissue types.  (Lower Left) A
  cluster heatmap of the genes selected by sparse PC
  3 for the tissue Gonads labeled by gender.  (Lower Right) A
  cluster heatmap of the genes selected by sparse PC
  8 for the tissue Heart labeled by age in months. }
\label{fig_agemap}
\end{figure}

\subsection{StarPlus fMRI Example}

As functional MRIs are a common source of high-dimensional tensor
data, we apply our methods to understand patterns for 
subject 04847 of the StarPlus fMRI experiment (publicly available from {\tt
  http://www.cs.cmu.edu/afs/cs.cmu.edu/project/theo-81/www/})
\citep{mitchell_starplus_2004}.  This data set consists of 4,698
voxels or spatial locations in the brain sampled on a $64 \times 64
\times 8$ grid, measured over 54 - 55 time
points for each of 40 tasks.  In each task, the subject was shown an
image and read a sentence.  The sentence either explained the image,
for example an image of a star with a plus above paired with the
sentence, ``The plus is above the star.'', or in which the sentence
negated the image.  We analyze data for each of the 36 tasks lasting
for 55 time points, yielding a tensor array of dimension $4,698 \times
55 \times 36$.

We apply HOPCA, Sparse HOPCA, Generalized CP, and Sparse Generalized CP-TPA
methods to understand the spatial, task and temporal patterns in the
fMRI data \citep{mrup_neuroimage_2008}.  The Generalized HOPCA methods
were applied with $\Q^{(1)}$ 
a graph Laplacian of 
the nearest neighbor graph connecting the voxels, $\Q^{(2)}$ a kernel
smoother over the 55 time points and $\Q^{(3)} = \mathbf{I}$.  These
quadratic operators were selected such that the variance explained by
the first GPC was maximal as described in \citet{allen_gmd_2011}.
Sparsity was incorporated into the spatial component to select
relevant regions of interest.  In the right panel of Figure
\ref{fig_varex},  we present the cumulative proportion of variance
explained by each of the methods.  Generalized HOPCA methods
strikingly explain a much great proportion of variance as they
incorporate known spatio-temporal structure into the tensor
factorization.  In Figure \ref{fig_fmri}, we compare the first two
HOPCs of the Tucker decomposition to those of the Sparse Generalized CP-TPA
method.  The latter yield interpretable temporal patterns and discrete
regions of interest related to the tasks.  The temporal patterns of
the Tucker decomposition appear to be confounded by sinusoidal noise
with noisy, uninterpretable spatial components.

\begin{figure}
\includegraphics[width=6in]{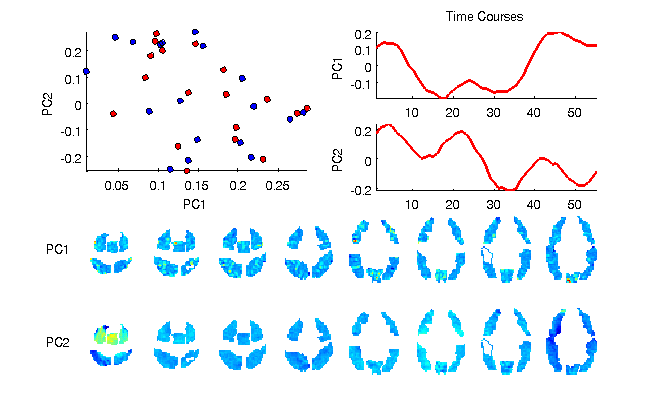} 
\includegraphics[width=6in]{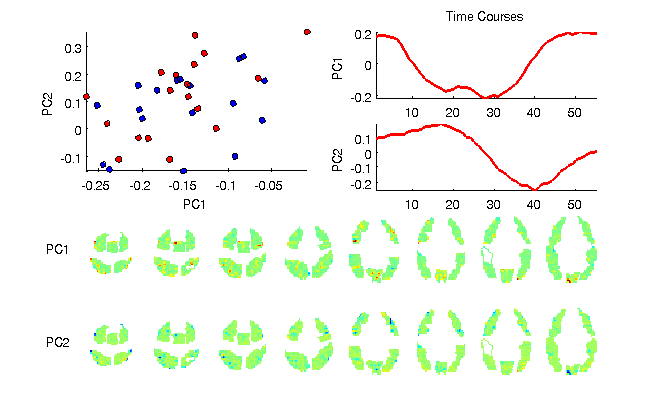}
\caption{\em \footnotesize Depictions of the first two task PCs, time
  course PCs 
  and spatial PCs for the Tucker decomposition (top panel) and Sparse
  Generalized CP-TPA (bottom panel) for the StarPlus fMRI data.  In
  the scatterplots, red denotes the sentence negating the image while
  blue denotes sentence and image agreement.}
\label{fig_fmri}
\end{figure}

\section{Discussion}
\label{section_dis}

We have developed methodology for regularizing HOPCA in the context of
high-dimensional tensor data.  Beginning 
with algorithmic approaches, we presented Sparse CP-ALS, Sparse HOSVD,
and Sparse HOOI methods based on popular algorithms for the CP and
Tucker decompositions.  While these methods are intuitive, they fail to
solve a coherent optimization problem, and are thus
mathematically and computationally less desirable.  Next, we develop a
greedy framework for computing HOPCs based on deflation and show that
a simple multi-way concave relaxation of this optimization problem
leads to an approach we term Sparse CP-TPA.  As this method converges
to a local solution of a well-defined optimization problem, it enjoys
many mathematical and computational advantages.  Finally, we show how
the deflation approach to HOPCA can be extended to work with general
penalties and constraints, and with structured or functional tensor
data.  A major strength of our approach is its flexibility to model
different types of high-dimensional tensor data, a quality illustrated
on our examples to microarray and fMRI data.

There are several items related to our work to study further.  First,
we note that while we have presented all of our 
methods with three-mode tensors for notational convenience, extensions
to tensors with an arbitrary number of modes is trivial.  The
extensions of our Sparse HOPCA frameworks to encompass general
penalties and constraints, structured tensors, and multi-way
functional data outlined in Section \ref{section_ext} deserve further
investigation and development.  Simulation studies and comparisons to
existing approaches are needed to fully evaluate the utility of these
methods.

Two important items related to our regularized HOPCA frameworks are
beyond the scope of this paper and require further investigation.
Determining how many HOPCs to extract for a given data set is an
important problem.  For PCA on matrix data, there are several
data-driven methods
available \citep{buja_1992, owen_2009_cv}, but extending these
approaches to the tensor domain 
is a non-trivial task.  Our result on the amount of variance explained
by each of the HOPCs or Sparse HOPCs, Theorem \ref{thm_var_ex}, gives
users of our methodology a way to gage the dimension 
reduction achieved, but further work needs to be done to determine
the optimal number of factors.  Secondly, results for every
regularized HOPCA method, and in fact all methods for HOPCA,
depend heavily on initial algorithmic starting values
\citep{kroonenberg_2008}.  This occurs as 
all HOPCA and regularized HOPCA methods find at best a local optimum.
With Sparse PCA 
methods on matrix data, the same problem occurs but is less critical
as one can initialize algorithms to the SVD which is a  global
solution.  With tensor factorizations, further work is required to
determine the best initializations for our regularized HOPCA
algorithms.

As the statistical attention given to high-dimensional tensor data has
been limited, topics for further study abound.  Recently, many have
made progress on understanding the asymptotic properties of PCA and
Sparse PCA by using random matrix theory \citep{johnstone_jasa_2009,
  jung_pca_2009, amini_2009}.  As yet, there is no work on 
developing consistency results for HOPCA which would serve as a
precursor to studying consistency for the methods presented in this
paper.  Additionally, our framework for
regularized HOPCA is a foundation upon which to study other
multivariate analysis techniques for tensor data.  These include
methods for clustering, canonical correlations analysis, partial
least squares, discriminant analysis, and multi-dimensional scaling.
Variable selection in the context of regression and classification for
high-dimensional matrix data has been a topic of great interest
recently.  One can imagine that variable selection for
high-dimensional tensors may be more critical in these supervised
prediction methods.

Finally, there are a plethora of possible applications of our
methodology.  Technologies in biomedical imaging are producing massive
multi-way data sets that are a challenge to understand and analyze.
Examples of these are especially common in neuroimaging, radiology,
and chemometrics.  Other imaging data sets such as from hyper-spectral
cameras and remote sensing often contain three or more measured
dimensions.  Additional examples of high-dimensional tensor data come from
environmental and climate studies, bibliometrics, and network
reliability.  Overall, our methods for regularized HOPCA are 
foundational tools for understanding 
high-dimensional tensors that will enjoy wide-ranging applicability.







\section{Acknowledgments}

This paper is based, in part, on a previous conference paper
\citet{allen_shopca_2012}.  The author would like to thank Eric Chi
for helpful 
discussions on tensors.  Supplemental materials contain proofs of all
theoretical results, algorithm outlines for all methods, and
additional simulation results.  Software based
on the {\tt Matlab Tensor Toolbox} \citep{tensor_toolbox} will be made
available from {\tt http://www.stat.rice.edu/$~$gallen/software.html}.

\appendix

\section{Algorithms for HOPCA \& Sparse HOPCA}

{\footnotesize
\begin{algorithm}
\caption{Sparse Higher-Order SVD (Sparse HOSVD)}
\label{shosvd_alg}
\begin{enumerate}
\item {\small $\U \leftarrow$ First $K_{1}$ sparse principal
  components of $\X_{(1)}$}. 
\item {\small $\V \leftarrow$ First $K_{2}$ sparse principal components of $\X_{(2)}$}.
\item {\small $\W \leftarrow$ First $K_{3}$ sparse principal components of $\X_{(3)}$}.
\item {\small $\tenD \leftarrow \tenX \times_{1} \U \times_{2} \V \times_{3} \W$}.
\end{enumerate}
\end{algorithm}

\begin{algorithm}
\caption{Sparse Higher-Order Orthogonal Iteration (Sparse HOOI), or
  Sparse Tucker-ALS} 
\label{shooi_alg}
\begin{enumerate}
\item Initialize $\U$, $\V$, and $\W$ using the
  Sparse HOSVD algorithm.
\item Repeat until convergence or maximum number of iterations reached:
\begin{enumerate}
\item {\small $\U \leftarrow$ First $K_{1}$ sparse principal
  components of $(\tenX \times_{2} \V \times_{3} \W)_{(1)}$}. 
\item {\small $\V \leftarrow$ First $K_{2}$ sparse principal
  components of $(\tenX \times_{1} \U \times_{3} \W)_{(2)}$}.
\item {\small $\W \leftarrow$ First $K_{3}$ sparse principal
  components of $(\tenX \times_{1} \U \times_{2} \V)_{(3)}$}.
\end{enumerate}
\item {\small $\tenD \leftarrow \tenX \times_{1} \U \times_{2} \V
  \times_{3} \W$}
\end{enumerate}
\end{algorithm}

\begin{algorithm}
\caption{Sparse CP-ALS Algorithm}
\begin{enumerate}
\item Initialize $\U^{(0)}$, $\V^{(0)}$ and $\W^{(0)}$.
\item Repeat until convergence or maximum number of iterations
  reached:
\begin{enumerate}
\item Estimate $\U^{(t+1)}$:
\begin{enumerate}
\item $\hat{\U} \leftarrow \argmin_{\U} \left\{ \frac{1}{2} || \X_{(1)}
  - \U ( \V^{(t)} \odot \W^{(t)} )^{T} ||_{F}^{2}  + \lambda_{\U} ||
  \U ||_{1} \right\}.$
\item $\hat{d}_{k} \leftarrow || \hat{\uvec}_{k} ||.$
\item $\uvec_{k}^{(t+1)} \leftarrow \hat{\uvec}_{k} / \hat{d}_{k}$.
\end{enumerate}
\item Estimate $\V^{(t+1)}$:
\begin{enumerate}
\item $\hat{\V} \leftarrow \argmin_{\V} \left\{ \frac{1}{2} || \X_{(2)}
  - \V ( \U^{(t+1)} \odot \W^{(t)} )^{T} ||_{F}^{2}  + \lambda_{\V} ||
  \V ||_{1} \right\}.$
\item $\hat{d}_{k} \leftarrow || \hat{\vvec}_{k} ||.$
\item $\vvec_{k}^{(t+1)} \leftarrow \hat{\vvec}_{k} / \hat{d}_{k}$.
\end{enumerate}
\item Estimate $\W^{(t+1)}$:
\begin{enumerate}
\item $\hat{\W} \leftarrow \argmin_{\W} \left\{ \frac{1}{2} || \X_{(3)}
  - \W ( \U^{(t+1)} \odot \V^{(t+1)} )^{T} ||_{F}^{2}  + \lambda_{\W} ||
  \W ||_{1} \right\}.$
\item $\hat{d}_{k} \leftarrow || \hat{\wvec}_{k} ||.$
\item $\wvec_{k}^{(t+1)} \leftarrow \hat{\wvec}_{k} / \hat{d}_{k}$.
\end{enumerate}
\end{enumerate}
\end{enumerate}
\end{algorithm}

\begin{algorithm}
\caption{Tensor Power Algorithm}
\begin{enumerate}
\item Initialize $\hat{\tenX} = \tenX$.
\item For $k = 1 \ldots K$
\begin{enumerate}
\item Repeat until converge:
\begin{enumerate}
\item $\uvec_{k} \leftarrow \hat{\tenX} \times_{2} \vvec_{k} \times_{3}
  \wvec_{k} \  / \ || \hat{\tenX} \times_{2} \vvec_{k} \times_{3}
  \wvec_{k} ||_{2}$.
\item $\vvec_{k} \leftarrow \hat{\tenX} \times_{1} \uvec_{k} \times_{3}
  \wvec_{k} \  / \ || \hat{\tenX} \times_{1} \uvec_{k} \times_{3}
  \wvec_{k} ||_{2}$.
\item $\wvec_{k} \leftarrow \hat{\tenX} \times_{1} \uvec_{k} \times_{2}
  \vvec_{k} \  / \ || \hat{\tenX} \times_{1} \uvec_{k} \times_{2}
  \vvec_{k} ||_{2}$.
\end{enumerate}
\item $d_{k} \leftarrow \hat{\tenX} \times_{1} \uvec_{k} \times_{2}
  \vvec_{k} \times_{3} \wvec_{k}$.
\item $\hat{\tenX} \leftarrow \hat{\tenX} - d_{k} \uvec_{k} \circ
  \vvec_{k} \circ \wvec_{k}$.  
\end{enumerate}
\end{enumerate}
\label{tensor_pow}
\end{algorithm}

\begin{algorithm}[!!h]
\caption{Sparse CP-TPA}
\label{tensor_pow}
{\small
\begin{enumerate}
\item Initialize $\hat{\tenX} = \tenX$.
\item For $k = 1 \ldots K$
\begin{enumerate}
\item Repeat until converge:
\begin{enumerate}
\item $\hat{\uvec}_{k} = S \left( \hat{\tenX} \times_{2} \vvec_{k} \times_{3}
  \wvec_{k}, \lamu \right)$.  \hspace{3mm}
$\uvec_{k} \leftarrow  \begin{cases}  \hat{\uvec}_{k} /  ||
  \hat{\uvec}_{k} ||_{2} & || \hat{\uvec}_{k} ||_{2} > 0 \\ 0 &
  \textrm{otherwise}. 
\end{cases}$
\item  $\hat{\vvec}_{k} = S \left( \hat{\tenX} \times_{1} \uvec_{k} \times_{3}
  \wvec_{k}, \lamv \right)$.  \hspace{3mm}
$\vvec_{k} \leftarrow \begin{cases}  \hat{\vvec}_{k} / ||
  \hat{\vvec}_{k} ||_{2} & || \hat{\vvec}_{k} ||_{2} > 0 \\ 0 &
  \textrm{otherwise}.   \end{cases}$
\item $\hat{\wvec}_{k} = S \left( \hat{\tenX} \times_{1} \uvec_{k} \times_{2}
  \vvec_{k}, \lamw \right)$.  \hspace{3mm}
$\wvec_{k} \leftarrow  \begin{cases} \hat{\wvec}_{k} /  ||
  \hat{\wvec}_{k} ||_{2} & || \hat{\wvec}_{k} ||_{2} > 0 \\ 0 &
  \textrm{otherwise}. \end{cases}$
\end{enumerate}
\item $d_{k} \leftarrow \hat{\tenX} \times_{1} \uvec_{k} \times_{2}
  \vvec_{k} \times_{3} \wvec_{k}$.
\item $\hat{\tenX} \leftarrow \hat{\tenX} - d_{k} \uvec_{k} \circ
  \vvec_{k} \circ \wvec_{k}$.  
\end{enumerate}
\end{enumerate}
}
\end{algorithm}
}

\section{Additional Simulation Results}

\begin{sidewaysfigure}
\begin{center}
\includegraphics[width=4.5in]{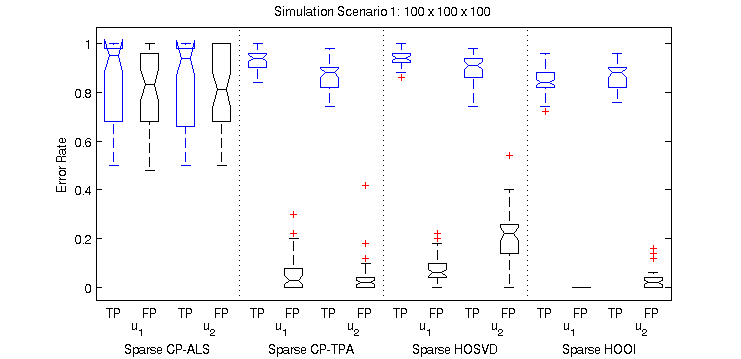}\includegraphics[width=4.5in]{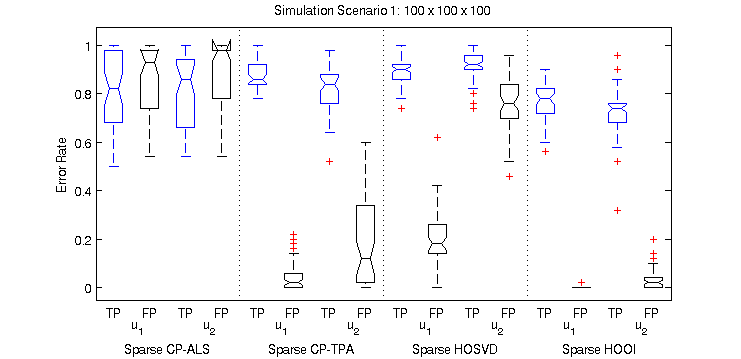} \\
\vspace{.2in}
\includegraphics[width=4.5in]{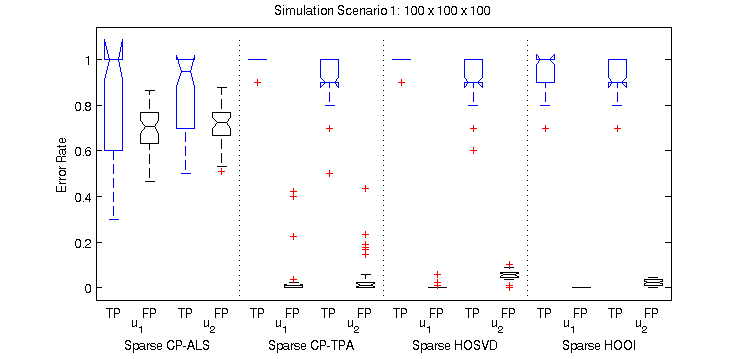}\includegraphics[width=4.5in]{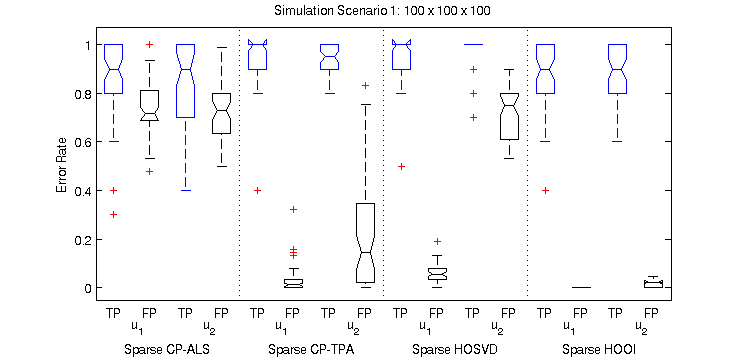}
\end{center}
\caption{ \em \footnotesize Boxplots of true and false positives for
  simulation scenario 1, $K=2$, for factors with 50\% sparsity (upper panels)
  or 90\% sparsity (lower panels) and with high signal (left panels),
  $D = [200 \ \ 100]^{T}$ or lower signal (right panels), $D = [100 \
  \ 50]^{T}$.} 
\end{sidewaysfigure}

\begin{sidewaysfigure}
\begin{center}
\includegraphics[width=4.5in]{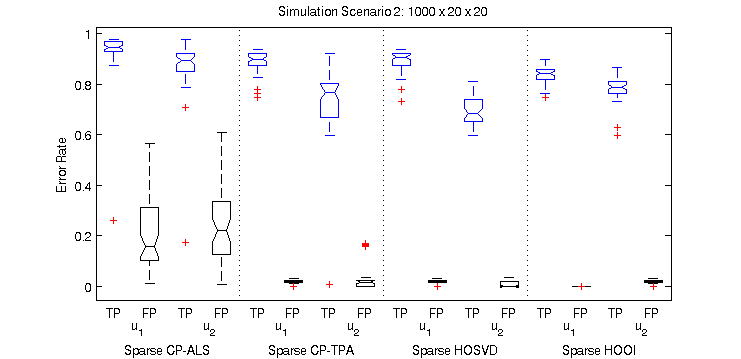}\includegraphics[width=4.5in]{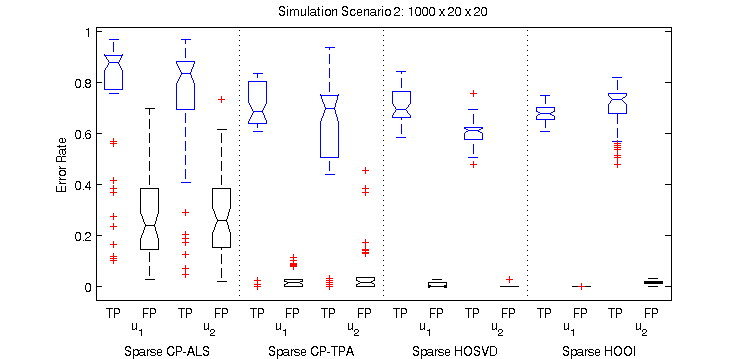} \\
\vspace{.2in}
\includegraphics[width=4.5in]{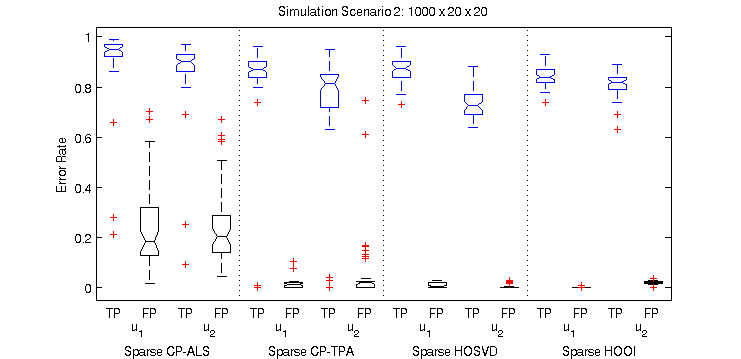}\includegraphics[width=4.5in]{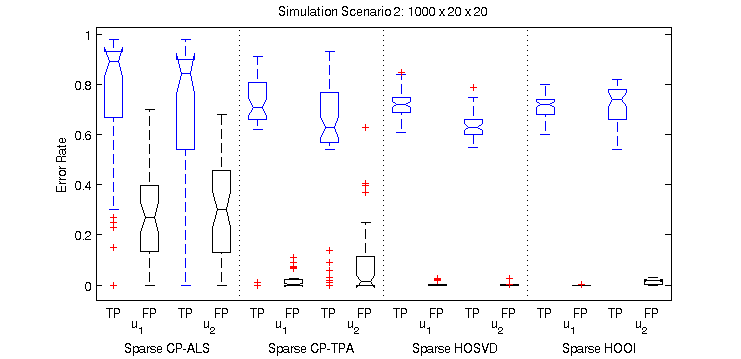}
\end{center}
\caption{ \em \footnotesize Boxplots of true and false positives for
  simulation scenario 2, $K=2$, for factors with 50\% sparsity (upper panels)
  or 90\% sparsity (lower panels) and with high signal (left panels),
  $D = [200 \ \ 100]^{T}$ or lower signal (right panels), $D = [100 \
  \ 50]^{T}$.} 
\end{sidewaysfigure}

\section{Proofs}

\begin{proof}[Proof of Theorem \ref{thm_var_ex}.]
The proof is an extension of a result in \citet{shen_spca_2008}.  Recall
that the cumulative proportion of variance explained by the first $k$
traditional principal components can be written as the ratio of the
squared Frobenius norm of the data projected onto the first $k$ left
and right singular vectors to the squared Frobenius norm of the data
\citep{jolliffe_2002}.  Notice that $\Pmat^{(U)}_{k}$,
$\Pmat^{(V)}_{k}$ and $\Pmat^{(W)}_{k}$ are projection
matrices with exactly $k$ eigenvalues equal to one.  Therefore,
the result is an extension of the definition of cumulative proportion
of variance explained to the tensor framework.  Namely, the
numerator is simply 
the projection of the tensor onto the subspace spanned by the first $k$
HOPCA factors.  
\end{proof}

\begin{proof}[Proof of Proposition \ref{prop_cp_als}]
We will show that the updates for $\U$ in the Sparse CP-ALS algorithm
are not equivalent to the optimal point for $\U$ from
\eqref{not_cp_als}.  Arguments for $\V$ and $\W$ are analogous.
First, note that the problem \eqref{not_cp_als} with respect to $\U$
can be rewritten using the 
Khatri-Rao product as $\minimize_{\U} \ \frac{1}{2} || \X_{(1)} - \U (
\V \odot \W )^{T} ||_{F}^{2} + \lamu || \U ||_{1} \ \textrm{subject
  to} \ \uvec_{k}^{T} \uvec_{k} \leq 1 \ \forall \ k=1, \ldots K$.  To
simplify notation, we will write $\Hmat = \V \odot \W$ and $\Y =
\X_{(1)}$.  As this problem is convex in $\U$, the KKT conditions are
necessary and sufficient for optimality.  These conditions include
the sub-gradient equation:
\begin{align}
\label{subgrad_not_cp_als}
\Y \Hmat^{T} - \U^{*} \Hmat \Hmat^{T} + \lamu \Gamma( \U^{*} ) - 2
\Psi^{*} \U^{*} = 0,
\end{align}
where $\Gamma()$ is the sub-gradient of the $\ell_{1}$-norm and
$\Psi^{*}$ is the Lagrange multiplier, namely a diagonal matrix that
from complimentary slackness is strictly non-zero, $\Psi^{*} \succ 0$
if and only if the columns of $\U^{*}$ have norm one.  

Now consider the sub-gradient equations implied by the Sparse CP-ALS
updates for $\U$ before scaling the solution $\hat{\U}$ to have column
norm one:
\begin{align*}
\Y \Hmat^{T} - \hat{\U} \Hmat \Hmat^{T} + \lamu \Gamma( \hat{\U} ) = 0.
\end{align*}
We can represent scaling the columns of $\hat{\U}$ as right
multiplying by a diagonal matrix: $\hat{\U} \D$.  Since $\Gamma()$ is
order one, then $\Gamma( \hat{\U} \D ) = \Gamma( \hat{\U} )$ and the
above sub-gradient equation is equal to:
\begin{align*}
\Y \Hmat^{T} - (\hat{\U} \D) \D^{-1} \Hmat \Hmat^{T} + \lamu \Gamma(
\hat{\U} \D ) &= 0,
\Y \Hmat^{T} - (\tilde{\U}) \D^{-1} \Hmat \Hmat^{T} + \lamu \Gamma(
\tilde{\U} ) &= 0,
\end{align*}
where $\tilde{\U} = \hat{\U} \D$.  Now, for the Sparse CP-ALS update
to solve \eqref{not_cp_als}, the above sub-gradient equation must be
equivalent to \eqref{subgrad_not_cp_als} for some diagonal $\Psi^{*}
\succ 0$ and some diagonal $\D \succ 0$.  This means that $\U^{*} (
\Hmat \Hmat^{T} + 2 \Psi^{*}) = \tilde{\U} \D^{-1} \Hmat \Hmat^{T}$.
Assuming that $\tilde{\U} = \U^{*}$ and solving for $\Psi^{*}$, we
have that $\Psi^{*} = \Hmat \Hmat^{T} ( \D^{-1} - \mathbf{I} ) / 2$.
Thus, $\Psi^{*}$ is diagonal if and only if $\Hmat \Hmat^{T}$ is
diagonal.  This is a contradiction.  Therefore, the updates from the
Sparse CP-ALS algorithm do not solve \eqref{not_cp_als}.   
\end{proof}

\begin{proof}[Proof of Proposition \ref{prop_tpa}]
Consider optimizing \eqref{cp_max} with respect to $\uvec$.  The
Lagrangian is given by $L(\uvec, \gamma) = ( \tenX \times_{2} \vvec
\times_{3} \wvec ) \times_{1} \uvec - \gamma ( \uvec^{T} \uvec - 1)$.
The KKT conditions imply that $\uvec^{*} = \frac{ \tenX \times_{2}
  \vvec \times_{3} \wvec}{2 \gamma^{*} }$ and $\gamma^{*}$ is such
that $( \uvec^{*})^{T} \uvec^{*} = 1$.  Putting these together we have
the desired result for $\uvec$.  The arguments for $\vvec$ and $\wvec$
are analogous.  
\end{proof}

\begin{proof}[Proof of Theorem \ref{thm_sparse_cp}]
The proof follows from an extension of results in
\citet{witten_pmd_2009} and \citet{allen_gmd_2011}.  In short, consider
optimizing \eqref{sparse_cp_prob} with respect to $\uvec$.  The KKT
conditions imply that $\tenX \times_{2} \vvec \times_{3} \wvec -
\rho_{\uvec} \mathbf{\Gamma}(\uvec^{*} ) - 2 \gamma^{*} \uvec^{*} = 0$
and $\gamma^{*} ( (\uvec^{*})^{T} \uvec^{*} - 1) = 0$ where
$\mathbf{\Gamma}(\uvec)$ is the subgradient of $|| \uvec||_{1}$, and
$\gamma$ is a Lagrange multiplier.  Consider $\hat{\uvec} = S( \tenX
\times_{2} \vvec \times_{3} \wvec, \rho_{\uvec})$.  Then, taking
$\uvec^{*} = \hat{\uvec} / || \hat{\uvec} ||_{2}$ and $\gamma^{*} = ||
\hat{\uvec} ||_{2} / 2$ simultaneously satisfies the KKT conditions.
Since the problem is convex in $\uvec$, the conditions are necessary
and sufficient; hence, the pair $( \uvec^{*}, \gamma^{*})$ are the
optimal points.  It is easy to verify that the pair $(0,0)$ also
satisfy the KKT conditions and are optimal points.
\end{proof}

\begin{proof}[Proof of Theorem \ref{thm_cp_gen_pen}]
The proof follows from an extension of results in
\citet{allen_gmd_2011}.  Let us briefly consider the case for $\uvec$
and the arguments for $\vvec$ and $\wvec$ are analogous.  We will show
that the optimal points, $(\uvec^{*}, \vvec^{*})$, implied by
\eqref{cp_prob_reg} are equivalent 
to the said solution.  The sub-gradient equation for
\eqref{cp_prob_reg} is: $\tenX \times_{2} \vvec \times_{3} \wvec - 2
\gamma^{*} \uvec^{*} - \lamu \nabla P_{\uvec}( \uvec^{*} ) = 0$, where $\nabla
P_{\uvec}()$ is the sub-gradient of $P_{\uvec}()$.  Now, consider the
subgradient equation of $\minimize_{\uvec} \ \frac{1}{2} || \tenX
\times_{2} \vvec \times_{3} \wvec - \uvec ||_{2}^{2} + \lamu
P_{\uvec}( \uvec )$ and for some $c>0$:
\begin{align*}
0 = \tenX \times_{2} \times_{3} \wvec - \hat{\uvec} - \lamu \nabla
P_{\uvec} ( \uvec ) = \tenX \times_{2} \times_{3} \wvec - \frac{1}{c}(
c \hat{\uvec}) -  \lamu \nabla P_{\uvec} ( \uvec ) =  \tenX \times_{2}
\times_{3} \wvec - \tilde{\uvec} / c  - \lamu \nabla P_{\uvec} (
\tilde{\uvec} ),
\end{align*}
where $\tilde{\uvec} = c \hat{\uvec}$.  Since $P_{\uvec}()$ is order
one, $\nabla P_{\uvec} ( x) = \nabla P_{\uvec}( c x) \ \forall c >
0$.  Taking $c = 1/ || \hat{\uvec} ||_{2}$ and letting $\gamma^{*} =
\frac{1}{2c} = || \hat{\uvec} ||_{2} / 2$, we see that for the pair $(
\uvec^{*} = \hat{\uvec} / || \hat{\uvec} ||_{2}, \gamma^{*} = ||
\hat{\uvec} ||_{2} / 2)$ the subgradient equation of the penalized
regression problem is equivalent to that of \eqref{cp_prob_reg}.
Following from complimentary slackness, $\gamma^{*} = 0$ if and only
if $\hat{\uvec} \equiv 0$, and hence the pair $(0,0)$ also satisfy the
KKT conditions of \eqref{cp_prob_reg}.  We have thus proven the
desired result.
\end{proof}

\begin{proof}[Proof of Proposition \ref{prop_ghopca}]
The proof for part one follows from an extension of Proposition
\ref{prop_tpa} to the generalized eigenvalue case as shown in
\citep{allen_gmd_2011}.  In brief, the gradient equation of
\eqref{gcp} is $\tenX \times_{1} \Q^{(1)} \times_{2} \Q^{(2)} \vvec
\times_{3} \Q^{(3)} \wvec - 2 \gamma \Q^{(1)} \uvec = 0$ meaning that
$\uvec^{*} = \tenX  \times_{2} \Q^{(2)} \vvec
\times_{3} \Q^{(3)} \wvec / 2 \gamma^{*}$ where $\gamma^{*}$ is such
that $\uvec^{T} \Q^{(1)} \uvec^{*} = 1$.  Putting these together we
have the desired result.  

The argument for part two follows from an extension of Theorem
\ref{thm_cp_gen_pen} and \citet{allen_gmd_2011}.  For completeness, we
show that the subgradient of \eqref{sparse_gcp} with respect to
$\uvec$, \\
$\tenX \times_{1} \Q^{(1)} \times_{2}
\Q^{(2)} \vvec \times_{3} \Q^{(3)} \wvec - 2 \gamma \Q^{(1)} \uvec -
\lamu \Gamma( \uvec) = 0$ where $\Gamma( \cdot )$ is the sub-gradient of $||
\cdot ||_{1}$ is equivalent to the Sparse Generalized CP update given
in Proposition \ref{prop_ghopca}.  Let
$\y = \tenX \times_{2} \Q^{(2)} \vvec \times_{3} \Q^{(3)}
\wvec$ and let $\hat{\uvec}$ be the argument minimizing $\frac{1}{2}
|| \y - \uvec ||_{\Q^{(1)}}^{2} + \lamu || \uvec ||_{1}$.  Then, for
some $c > 0$, $0 = \Q^{(1)} \y - \Q^{(1)} \hat{\uvec} - \lamu \Gamma(
\hat{\uvec} ) = 
\Q^{(1)} \y - \Q^{(1)} \tilde{\uvec} / c - \lamu \Gamma( \tilde{\uvec} )$,
where $\tilde{\uvec} = c \hat{\uvec}$, since $\Gamma ( \hat{\uvec}
) = \Gamma( c \hat{\uvec} ) \ \forall c > 0$.  Taking $\gamma^{*} =
1/2c = || \hat{\uvec} ||_{\Q^{(1)}} / 2$ we see that the for both
pairs $( \uvec^{*} = \hat{\uvec} / || \hat{\uvec} ||_{\Q}^{(1)},
\gamma^{*} = || \hat{\uvec} ||_{\Q^{(1)}} / 2)$ and $(\uvec^{*} = 0,
\gamma^{*} = 0)$ satisfy the KKT conditions of \eqref{sparse_gcp},
thus proving the desired result.  
\end{proof}

\begin{proof}[Proof of Proposition \ref{prop_hofpca}]
For part 1, consider the gradient equation of \eqref{tensor_fpca} with
respect to $\uvec$: $\frac{\partial}{\partial \uvec} = -2 \tenX
\times_{2} \vvec \times_{3} \wvec + 2 \uvec \vvec^{T} \vvec \wvec^{T}
\wvec - 2 \uvec || \vvec ||_{2}^{2} || \wvec ||_{2}^{2} + 2
\Smat_{\uvec} \uvec \vvec^{T} \Smat_{\vvec} \vvec \wvec^{T}
\Smat_{\wvec} \wvec = 0$.  It is then easy to see that the coordinate
update is $\uvec^{*} = \Smat_{\uvec}^{-1} \tenX \times_{2} \vvec
\times_{3} \wvec / ( \vvec^{T} \Smat_{\vvec} \vvec \wvec^{T}
\Smat_{\wvec} \wvec )$.  

For part 2, let us review the two-way matrix case proved in
\citet{huang_twfpca_2009}.  They show that two-way FPCA via
half-smoothing implies 
that $\uvec \propto \Smat_{\uvec}^{-1} \X \vvec$ and $\vvec \propto
\Smat_{\vvec}^{-1} \X^{T} \uvec$ which are the stationary points of
the two gradient equations for the penalized optimization problem.
For our tensor case, the gradient equations of \eqref{tensor_fpca}
imply $\uvec \propto \Smat_{\uvec}^{-1} \tenX \times_{2} \vvec
\times_{3} \wvec$ and so forth, but these are not satisfied by
tensor half-smoothing.  As the Tucker decomposition has a non-diagonal
core, $\uvec_{1}$ is proportional to the first column of
$\Smat_{\uvec}^{-1} ( \tenX \times_{2} \V \times_{3} \W ) \tenD$.
Thus, the stationary points implied by half-smoothing and
\eqref{tensor_fpca} are not equivalent.
\end{proof}

{\footnotesize
\singlespacing
\bibliographystyle{Chicago}
\bibliography{tensors}
}

\end{document}